\documentclass{article}
\usepackage{lmodern}
\PassOptionsToPackage{numbers,sort&compress}{natbib}
\usepackage{natbib}

\usepackage[utf8]{inputenc} 
\usepackage[T1]{fontenc}    

\usepackage{url}            
\usepackage{booktabs}       
\usepackage{amsfonts}       
\usepackage{nicefrac}       
\usepackage{microtype}      
\usepackage{xcolor}         
\usepackage{microtype}      
\usepackage{amsmath, amsthm, amssymb} 
\usepackage{algorithm}
\usepackage{algorithmic}
\usepackage{bm}
\usepackage{mathtools}
\usepackage{xcolor}
\usepackage{graphicx}
\usepackage{wrapfig,lipsum}
\usepackage{thmtools, thm-restate}
\usepackage{mathrsfs} 
\usepackage{bbold}
\usepackage{multirow}

\usepackage{graphicx}
\usepackage{subcaption}
\usepackage{mwe} 

\usepackage[shortlabels]{enumitem}
\usepackage{cancel}

\newcommand{\algo}{ROVER}
\newcommand{\ALGO}{Reward-free pretraining via Occupancy coVERage maximization (ROVER)}

\definecolor{dgreen}{rgb}{0.00,0.49,0.00}
\definecolor{dblue}{rgb}{0,0.08,0.75}

\RequirePackage[colorlinks,citecolor=dgreen,urlcolor=dblue,linkcolor=dblue]{hyperref}

\newcommand{\sink}{S_\varepsilon}

\newcommand{\bpr}{\begin{proof}}
\newcommand{\epr}{\end{proof}}
\newcommand{\be}{\begin{equation}}
\newcommand{\ee}{\end{equation}}

\newcommand{\bd}{\begin{definition}}
\newcommand{\ed}{\end{definition}}

\newcommand{\bi}{\begin{itemize}}
\newcommand{\ei}{\end{itemize}}

\newcommand{\ba}{\begin{ass}}
\newcommand{\ea}{\end{ass}}

\newcommand{\br}{\begin{remark}}
\newcommand{\er}{\end{remark}}

\newcommand{\bp}{\begin{proposition}}
\newcommand{\ep}{\end{proposition}}

\newcommand{\blm}{\begin{lemma}}
\newcommand{\elm}{\end{lemma}}

\newcommand{\bt}{\begin{theorem}}
\newcommand{\et}{\end{theorem}}

\newcommand{\bcor}{\begin{corollary}}
\newcommand{\ecor}{\end{corollary}}

\newcommand{\bex}{\begin{example}}
\newcommand{\eex}{\end{example}}

\usepackage[capitalize]{cleveref}

\crefname{assumption}{Assumption}{Assumptions}
\crefname{equation}{Eq.}{Eqs.}
\crefname{figure}{Fig.}{Fig.}
\crefname{table}{Table}{Tables}
\crefname{section}{Sec.}{Sec.}
\crefname{theorem}{Thm.}{Thm.}
\crefname{lemma}{Lemma}{Lemmas}
\crefname{corollary}{Cor.}{Cor.}
\crefname{example}{Example}{Examples}
\crefname{remark}{Remark}{Remarks}
\crefname{algorithm}{Alg.}{Algorightms}

\crefname{appendix}{Appendix}{Appendices}

\makeatletter
\def\@endtheorem{\endtrivlist}
\makeatother

\usepackage{etoolbox}

\declaretheorem[name=Theorem,refname=Theorem]{theorem}
\declaretheorem[name=Lemma,sibling=theorem]{lemma}

\declaretheorem[name=Proposition,refname=Proposition,sibling=theorem]{proposition}
\declaretheorem[name=Remark,sibling=theorem]{remark}
\declaretheorem[name=Corollary,refname=Corollary,sibling=theorem]{corollary}
\declaretheorem[name=Definition,refname=Definition]{definition}

\declaretheorem[name=Example]{example}

\let\mathsf\relax    
\DeclareRobustCommand{\mathsf}[1]{\text{\normalfont\sffamily#1}}

\newcommand{\msf}[1]{\mathsf{#1}}

\newcommand{\X}{{\mathcal X}}
\newcommand{\A}{{\mathcal{A}}}

\newcommand{\hh}{{\mathcal{H}}}

\renewcommand{\paragraph}[1]{~\newline\noindent{\bf #1.}}

\crefname{assumption}{Assumption}{Assumptions}
\crefname{equation}{}{}
\Crefname{equation}{Eq.}{Eqs.}
\crefname{figure}{Fig.}{Fig.}
\crefname{table}{Table}{Tables}
\crefname{section}{Sec.}{Sec.}
\crefname{theorem}{Thm.}{Thm.}
\crefname{proposition}{Prop.}{Prop.}
\crefname{fact}{Fact}{Facts}
\crefname{lemma}{Lemma}{Lemmas}
\crefname{corollary}{Cor.}{Cor.}
\crefname{example}{Example}{Examples}
\crefname{remark}{Remark}{Remarks}
\crefname{algorithm}{Alg.}{Algorightms}

\crefname{appendix}{Appendix}{Appendices}

\newcommand{\eps}{\varepsilon}

\renewcommand{\sink}{{\msf{S}_\eps}}

\AfterEndEnvironment{restatable}{\noindent\ignorespacesafterend}
\AfterEndEnvironment{theorem}{\noindent\ignorespacesafterend}
\AfterEndEnvironment{remark}{\noindent\ignorespacesafterend}
\AfterEndEnvironment{example}{\noindent\ignorespacesafterend}
\AfterEndEnvironment{assumption}{\noindent\ignorespacesafterend}
\AfterEndEnvironment{lemma}{\noindent\ignorespacesafterend}
\AfterEndEnvironment{proof}{\noindent\ignorespacesafterend}
\AfterEndEnvironment{corollary}{\noindent\ignorespacesafterend}
\AfterEndEnvironment{proposition}{\noindent\ignorespacesafterend}
\AfterEndEnvironment{definition}{\noindent\ignorespacesafterend}

\makeatletter
\def\@endtheorem{\endtrivlist}
\makeatother

\usepackage[capitalize]{cleveref}

\crefname{assumption}{Asm.}{Asm.}
\crefname{equation}{}{}
\Crefname{equation}{Eq.}{Equations}
\crefname{figure}{Fig.}{Figs.}
\crefname{table}{Tab.}{Tabs.}
\crefname{section}{Sec.}{Sec.}
\crefname{theorem}{Thm.}{Thm.}
\crefname{lemma}{Lemma}{Lemmas}
\crefname{corollary}{Cor.}{Cor.}
\crefname{example}{Example}{Examples}
\crefname{remark}{Remark}{Remarks}
\crefname{algorithm}{Alg.}{Algorithms}
\crefname{appendix}{Appendix}{Appendices}
\crefname{subappendix}{Appendix}{Appendices}
\crefname{subsubappendix}{Appendix}{Appendices}

\newcommand{\spX}{\mathcal{X}}
\newcommand{\spA}{\mathcal{A}}

\newcommand{\trop}{\msf{T}}
\newcommand{\polop}{\msf{P}}
\newcommand{\pol}{\pi}

\newcommand{\Id}{\msf{Id}}

\title{Reward-free Pretraining for Reinforcement Learning \\via Occupancy Coverage Maximization}

\author{ 
   Marco Prattic\`o$^{1}$\\
  {\footnotesize\texttt{marco.prattico@iit.it}}
  \and 
  Pietro Novelli$^{1}$ \\ 
  {\footnotesize\texttt{pietro.novelli@iit.it}} 
  \and Massimiliano Pontil$^{1,2}$\\
  {\footnotesize\texttt{massimiliano.pontil@iit.it}}
  \and Carlo Ciliberto$^{2}$\\ 
  {\footnotesize\texttt{ c.ciliberto@ucl.ac.uk}}  
}
\date{}

\usepackage[textsize=tiny]{todonotes}

\begin{document}

\maketitle
\footnotetext[1]{Computational Statistics and Machine Learning - Istituto Italiano di Tecnologia, 16100 Genova, Italy}\footnotetext[2]{AI Centre, Computer Science Department, University College London, London, UK.}
\setcounter{footnote}{2}

\begin{abstract}
\noindent Sparse rewards pose a central challenge in reinforcement learning, since agents receive no informative signal until they reach their goal. Intrinsic-reward methods address this issue by optimizing non-stationary objectives such as novelty, prediction error, or skill diversity, thereby injecting a supervision signal into the problem. While effective, these methods often require that the extrinsic (sparse) reward can be evaluated -- either online or during offline relabeling of the stored transitions. This limitation is particularly vexing for multi-task, meta-, and continual reinforcement learning, where agents' interactions with the environment are usually {\em reward-free}. In this work, we present a method to pre-train {\em transferable exploration policies} that rapidly adapt to sparse rewards at downstream task time. Our objective maximizes state-space covering for the occupancy measure, and can be framed in terms of entropy maximization. Its algorithmic implementation, ROVER, leverages recent advances on the operatorial formulation of RL to estimate occupancy with a learned resolvent world model, bypassing common hurdles associated with density and entropy estimation. ROVER further introduces a virtual ``sink" state for unexplored regions, balancing coverage of known states with expansion into unseen ones and preventing cyclic expansion–collapse behavior during learning. In tabular and pixel-based sparse navigation tasks, ROVER produces more uniform aggregate coverage and stronger initializations for downstream tasks than standard reward-free baselines.

\end{abstract}

\section{Introduction}
Sparse rewards remain a central obstacle for reinforcement learning (RL) \cite{sutton1998reinforcement}.
As informative feedback is observed only after reaching a task-dependent event, the behavior available at the beginning of training becomes of primary importance for downstream learning.
Reward-free pretraining addresses this observation by letting the agent interact with the environment before the reward is specified, and can precondition the explicit-reward problem through data \cite{yarats2022dontchangealgorithmchange}, representations \cite{touati2021learning, agarwal2024proto, tirinzoni2025zero, barreto2017successor}, world models \cite{hansen2023tdmpc2, hafner2023mastering}, or policies \cite{laskin2021urlb}.
However, data- and representation-based transfer often assumes that the downstream reward function can be evaluated offline after pretraining, for instance by relabeling stored transitions.
This assumption is often inappropriate in multi-task, meta, or continual sparse-reward problems, where meaningful feedback is observed only online, when the agent actually reaches the task-dependent goal.

Many reward-free objectives are motivated by coverage.
Maximum-entropy and state-marginal objectives seek broad state occupancies \cite{hazan2019maxent, mutti2020policy, yarats2021reinforcement, lee2019smm, liu2021apt}; prediction-error and curiosity methods reward novelty relative to a learned model \cite{pathak2017curiosity, burda2018rnd, pathak2019self}; and skill-diversity methods encourage distinguishable outcomes \cite{tolguenec2024explorationlearningdiverseskills, laskin2022CIC}.
In practice, these objectives are often optimized through non-stationary proxies that change with the data, the density estimate, the representation, or the auxiliary model.
As a result, a method may discover many states during pretraining and still end with a policy whose discounted occupancy is concentrated in a small region \cite{lee2019smm, burda2018rnd}.

In this work, we focus on policy transfer when the downstream reward is unavailable both during pretraining and at any later offline relabeling stage.
We formulate reward-free policy pretraining as \emph{occupancy coverage}: learning a {\em stationary} Markov policy whose discounted state occupancy is broadly distributed over the reachable state space.
Concretely, we embed the state occupancy measure in a Reproducing Kernel Hilbert Space (RKHS)~\cite{muandet2017kernel} and minimize its squared Hilbert norm, which penalizes concentrated occupancies under the kernel-induced notion of similarity.
This objective is equivalent to maximizing the metric R\'enyi entropy of order $2$~\cite{leinster2021maximum}; in the tabular setting, it is also equivalent to matching the uniform measure in MMD distance, see~\cref{prop:tabular_equivalence}.
With learned representations, the same objective promotes diversity in the similarity geometry induced by the representation.

We implement this objective in \ALGO.
ROVER is an end-to-end algorithm that learns a representation, estimates the kernel mean embedding of the occupancy measure in closed form, and computes policy gradients for its squared norm.
We solve the resulting policy-improvement step by policy mirror descent in closed form.
A key component of ROVER is a virtual ``sink'' state that augments the learned transition operator to handle unsupported state-action regions.
This augmentation yields an explicit cost for assigning occupancy outside the current data support and prevents the expansion--collapse behavior observed with purely data-supported operators.

Our contributions are threefold.
First, we formalize occupancy coverage as a target-free RKHS objective for sparse-reward policy transfer and relate it to MMD matching, R\'enyi-2 diversity, and potential-theoretic energy minimization.
Second, we derive an operator-based policy update for minimizing this objective from a learned transition model.
Third, we introduce a sink-state augmentation for the transition operator and show empirically that it stabilizes coverage under partial data.
Experiments in tabular and pixel-based sparse navigation environments show that \algo{} induces more transferable coverage behavior than standard reward-free baselines.

\section{Problem Statement: Occupancy Coverage}\label{sec:problem_statement}

Consider a discounted Markov decision process with state space $\spX$, action space $\spA$, transition kernel $\tau(\cdot\mid x,a)$, initial state distribution $\nu_0$, and discount factor $\gamma\in(0,1)$.
For a stationary Markov policy $\pi(\cdot\mid x)$, its discounted state occupancy is the probability measure defined by
\begin{equation}\label{eq:discounted_occupancy}
d_{\nu_0}^{\pi}(B)
= (1-\gamma)\sum_{t=0}^{\infty}\gamma^t
\mathbb P_\pi(x_t\in B\mid x_0\sim \nu_0),
\qquad B\subseteq\spX\ \text{measurable},
\end{equation}
where the probability is taken over trajectories induced by $\pi$ and $\tau$.
Equivalently, $d_{\nu_0}^{\pi}(B)$ captures the probability of stopping in $B$ when starting from $\nu_0$ and then following $\pi$ for a number of steps sampled from the geometric distribution with mean $\gamma/(1-\gamma)$.
We write $d^\pi$ when $\nu_0$ is fixed.

Exploration strategies for sparse-reward problems often try, explicitly or implicitly, to spread the state occupancy.
The standard formulation of this principle is to seek a policy whose state occupancy has maximal entropy.
This viewpoint underlies maximum-entropy exploration methods \cite{hazan2019maxent, mutti2020policy, yarats2021reinforcement} and is also the reference point for state-marginal matching, which casts coverage as minimizing $D(d^\pi,u)$ for a reference state distribution $u$ and a divergence or discrepancy $D$ (such as the KL) \cite{lee2019smm}.
The closest match to our work is MEPOL \cite{mutti2020policy}. However, it relies on a hand-specified state metric, making nontrivial the extension to high-dimensional states, such as raw images, motivating ROVER’s learned latent-kernel formulation.
Other methods implement the same objective less directly: APT estimates a particle-based entropy surrogate \cite{liu2021apt}; RND and related curiosity methods use prediction error as a time-varying novelty signal \cite{burda2018rnd,pathak2017curiosity,pathak2019self}; and skill-diversity methods encourage coverage through distinguishable latent-conditioned outcomes \cite{laskin2022CIC,tolguenec2024explorationlearningdiverseskills}.

The practical difficulty is that many implementations couple the coverage criterion to a moving auxiliary object, such as a predictor, density estimator, nearest-neighbor geometry, or skill discriminator.
Such non-stationarity can produce policies that transiently discover new regions without retaining them in the final discounted occupancy.
The stronger requirement of covering states without forgetting them along an individual trajectory is also generally history-dependent: deciding whether the next state improves coverage depends on the set of states already visited.

We use kernel mean embeddings to express coverage as a discrepancy between occupancies \cite{muandet2017kernel}.
Let $\phi:\spX\to\mathcal H$ be a feature map into an RKHS with kernel $k(x,y)=\langle \phi(x),\phi(y)\rangle_{\mathcal H}$.
For probability measures $\rho$ and $q$ over $\spX$, define
\begin{equation}
\mu_\rho
= \int_{\spX}\phi(x)\,\rho(dx) \qquad \textrm{and} \qquad
{\rm MMD}^2(\rho,q)
= \|\mu_\rho-\mu_q\|_{\mathcal H}^2. \label{eq:mmd_objective}
\end{equation}
If a target distribution $u$ were known, minimizing ${\rm MMD}^2(d^\pi,u)$ would be a direct occupancy-matching objective.
However, a uniform target may be undefined on non-compact spaces, expensive to estimate in high-dimensional observation spaces, and infeasible when the reachable occupancy set does not contain the uniform distribution.
The following finite-state calculation shows that, in the tabular case, the target can be removed without altering the minimizer.

\begin{proposition}\label{prop:tabular_equivalence}
Let $\spX$ be finite with $|\spX|=N$, let $k(x,y)=\delta_{xy}$, and let $u$ be the uniform distribution over $\spX$.
Then, for any policy $\pol$ we have $\|\mu_{d^\pi}\|_{\mathcal H}^2=\sum_{x\in\spX}d^\pi(x)^2$ and
\begin{equation}
    \arg\min_{\pi\in\Pi}~~ {\rm MMD}^2(d^\pi,u)
    =
    \arg\min_{\pi\in\Pi}~~ \|\mu_{d^\pi}\|_{\mathcal H}^2.
\end{equation}
\end{proposition}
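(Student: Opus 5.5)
With the Kronecker kernel $k(x,y)=\delta_{xy}$ on a finite $\spX$, the feature map is $\phi(x)=e_x$, the canonical basis vector of $\mathbb R^N$. The RKHS is $\mathcal H=\mathbb R^N$ with the Euclidean inner product, since $\langle e_x,e_y\rangle=\delta_{xy}$ reproduces the kernel. I will therefore identify every kernel mean embedding with the probability vector of its measure.

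\textbf{Step 1: the norm identity.} For any probability measure $\rho$ on $\spX$, the integral in \eqref{eq:mmd_objective} becomes a finite sum:
\[
\mu_\rho=\sum_{x\in\spX}\rho(x)e_x=(\rho(x))_{x\in\spX}.
\]
Taking $\rho=d^\pi$ and expanding the squared norm,
\[
\|\mu_{d^\pi}\|_{\mathcal H}^2=\sum_{x,y}d^\pi(x)d^\pi(y)\,k(x,y)=\sum_{x}d^\pi(x)^2,
\]
which is the first claim. Nothing about $\pi$ enters except that $d^\pi$ is a probability vector, which holds by \eqref{eq:discounted_occupancy}.

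\textbf{Step 2: expand the MMD.} Expand the squared norm of the difference:
\[
{\rm MMD}^2(d^\pi,u)=\|\mu_{d^\pi}\|^2-2\langle\mu_{d^\pi},\mu_u\rangle+\|\mu_u\|^2.
\]
Since $\mu_u=\tfrac1N\mathbf 1$, the cross term is
\[
\langle\mu_{d^\pi},\mu_u\rangle=\tfrac1N\sum_x d^\pi(x)=\tfrac1N,
\]
using only that $d^\pi$ has total mass one. The last term is $\|\mu_u\|^2=\tfrac1N$. Putting these together,
\[
{\rm MMD}^2(d^\pi,u)=\|\mu_{d^\pi}\|_{\mathcal H}^2-\tfrac1N
\]
for every $\pi\in\Pi$.

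\textbf{Step 3: equality of argmins.} The two objectives differ by the constant $1/N$, which does not depend on $\pi$, so their sets of minimizers over $\Pi$ coincide. This holds whether or not those sets are empty or are singletons.

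There is no real obstacle. The only point needing care is Step 2: the cross term is constant only because $d^\pi$ is normalized, which relies on the factor $(1-\gamma)$ in \eqref{eq:discounted_occupancy}. The same argument would go through for any target $u$ for which $\langle\mu_\rho,\mu_u\rangle$ is constant over probability measures $\rho$, but the uniform target is the case the statement requires.
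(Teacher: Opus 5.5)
Your proof is correct and follows essentially the same route as the paper: identify the Dirac-kernel embedding with the probability vector, then expand ${\rm MMD}^2(d^\pi,u)$ and use $\sum_x d^\pi(x)=1$ to show that the two objectives differ by the $\pi$-independent constant $1/N$. The only difference is cosmetic: you expand through the inner products $\|\mu_{d^\pi}\|^2-2\langle\mu_{d^\pi},\mu_u\rangle+\|\mu_u\|^2$, whereas the paper expands $\sum_x\left(d^\pi(x)-1/N\right)^2$ coordinatewise.
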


\begin{proof}
With the Dirac kernel, the embedding is the probability vector itself; hence
\begin{equation}
\|\mu_{d^\pi}\|_{\mathcal H}^2
= \sum_{x,y\in\spX} d^\pi(x)d^\pi(y)\delta_{xy}
= \sum_{x\in\spX}d^\pi(x)^2.
\end{equation}
Furthermore,
\begin{align}
{\rm MMD}^2(d^\pi,u)
&= \sum_{x\in\spX}\left(d^\pi(x)-\frac{1}{N}\right)^2 \nonumber = \sum_{x\in\spX}d^\pi(x)^2 - \frac{2}{N}\sum_{x\in\spX}d^\pi(x) + \frac{1}{N}
 = \|\mu_{d^\pi}\|_{\mathcal H}^2 - \frac{1}{N},
\end{align}
because $\sum_x d^\pi(x)=1$.
The two objectives, therefore, differ by a constant independent of $\pi$.
\end{proof}

This motivates the target-free self-similarity objective
\begin{equation}\label{eq:objective}
    \mathcal J(\pi)
    = \|\mu_{d^\pi}\|_{\mathcal H}^2
    = \iint k(x,y)\,d^\pi(dx)\,d^\pi(dy).
\end{equation}
The objective penalizes assigning probability mass to pairs of states that are similar under the kernel. For normalized nonnegative similarity kernels, \cref{eq:objective} is the inverse of the order-2 diversity of $d^\pi$ \cite{leinster2021maximum}; with the Dirac kernel this reduces to the inverse collision probability, and $\log(1/\mathcal J(\pi))$ is the R\'enyi entropy of order $2$.
\Cref{app:Reniy_diversity} expands this interpretation, while \cref{app:potential_theory} gives the complementary potential-theoretic view of \cref{eq:objective} as an energy minimization problem.
In learned representation spaces, \cref{eq:objective} should not be read as uniformity over raw observations, rather as uniformity over representations.
\algo{} makes such geometry explicit, estimates the occupancy induced by the current representation through an operator model, and controls unsupported regions through the sink-state mechanism introduced in \cref{sec:sink_state}.

\section{Proposed Algorithm: \algo}\label{sec:rover}
\algo{} uses the operator formulation of Markov decision processes to make the occupancy objective computable.
Following~\cite{novelli2024operator} and recent work on learning Koopman-style transition operators in controlled systems \cite{takeishi2017learning,lusch2018deep,otto2019linearly,kostic2022learning,kostic2023sharp,nuske2023finite,jeong2025efficient,turri2025self,rozwood2024koopman}, we represent state-distribution evolution through linear operators.
Once such an operator is estimated from data, discounted occupancies of candidate policies can be written and differentiated in closed-form.
For a state-action distribution $\rho\in\mathcal M(\spX\times\spA)$ and a state distribution $\nu\in\mathcal M(\spX)$, define the transition and policy operators by
\begin{equation}\label{eq:operators}
    (\trop\mu)(B)
    = \int_{\spX\times\spA}\tau(B\mid x,a)\,\rho(dx,da),
    \qquad
    (\polop_\pi\nu)(C)
    = \int_{\spX}\int_{\spA}\mathbf 1_C(x,a)\,\pi(da\mid x)\,\nu(dx),
\end{equation}
for measurable $B\subseteq\spX$ and $C\subseteq\spX\times\spA$.
Then the state distribution evolves as $\nu_{t+1}=\trop\polop_\pi\nu_t$, and the discounted occupancy admits the resolvent form
\begin{equation}\label{eq:state_occupancy}
d_{\nu_0}^{\pi}
= (1-\gamma)\sum_{t=0}^{\infty}\gamma^t(\trop\polop_\pi)^t\nu_0
= (1-\gamma)(\Id-\gamma\trop\polop_\pi)^{-1}\nu_0.
\end{equation}
Since our objective $\mathcal{J}(\pol)$ in \cref{eq:objective} involved the MMD, we consider the natural lifting of $\trop$ and $\polop$ to mean embeddings. More precisely, given two representations $\psi:\X\times\A\to\mathcal{G}$ and $\phi:\X\to\mathcal{H}$ into two feature spaces $\mathcal{G}$ and $\mathcal{H}$ respectively, we can define $\tilde\trop \mu^\psi_\rho = \mu^\phi_{\trop\rho}$, with $\mu^\psi$ and $\mu^\phi$ denoting the corresponding mean embeddings. Where clear from the context, with some abuse of notation, we will use the simplified notation $\trop \mu_{\rho} = \mu_{\trop\rho}$. The policy operator can be lifted in the same way, so the resolvent characterization in \cref{eq:state_occupancy} extends to mean embeddings and can be readily used to reformulate our objective as
\begin{equation}
    \mathcal{J}(\pol) = \|(1-\gamma)(\Id - \gamma \trop \polop_\pol)^{-1}\mu_{\nu_0}\|_\hh^2
\end{equation} 

Leveraging the formulation above, ROVER estimates the lifted version of $\trop$ and uses the corresponding resolvent to differentiate an approximate version of $\mathcal J(\cdot)$. The resulting method alternates between data collection and three steps: representation learning, transition-operator estimation, and policy mirror descent, as summarized in \cref{alg:rover_main}.

\begin{algorithm}[t]
\caption{\algo{} pretraining loop}
\label{alg:rover_main}
\begin{algorithmic}[1]
\REQUIRE Discount $\gamma$, PMD step size $\eta$, KRR regularization $\lambda$, sink scale $\eps$
\STATE Initialize behavior policy $p=p_0$ and replay buffer $\mathcal D=\emptyset$
\FOR{each pretraining round}
    \STATE Collect transitions using $p$ and add them to $\mathcal D$
    \STATE \textbf{Representation learning:} update $\phi_\theta$ on $\mathcal D$ using \cref{eq:l_nce}
    \STATE \textbf{World-model estimation:} compute $\bar\trop_n$ by the sink-biased regression in \cref{eq:augmented_cme_opt}
    \STATE \textbf{Policy mirror descent:} initialize $\pi_0(\cdot\mid x)=1/|\spA|$ and $C=0$
    \FOR{$j=0,\ldots,T_{\mathrm{PMD}}-1$}
        \STATE Compute $c_{\pi_j}$ from \cref{thm:dual_gradient}
        \STATE $C\gets C+c_{\pi_j}$
        \STATE $\pi_{j+1}(\cdot\mid x)
        \gets
        \sigma\!\left(
        -\eta\langle\psi(x,\cdot),\Psi_n^\top C\rangle\right)$
    \ENDFOR
    \STATE Update $p\gets\pi_{T_{\mathrm{PMD}}}$
\ENDFOR
\STATE \textbf{return} pretrained policy $p$ and representation $\phi_\theta$
\end{algorithmic}
\end{algorithm}

\paragraph{Step 1: Representation learning.}
The feature map $\phi$ determines the geometry of the coverage objective, so ROVER learns $\phi$ from the transition data rather than fixing it a priori.
Given transitions $(x,a,x')$, we compute $L_1$-normalized latent states $\phi_\theta(x)=f_\theta(x)/\|f_\theta(x)\|_1$ and state-action features $\psi_\theta(x,a)=\phi_\theta(x)\otimes e_a$, where $e_a$ is the one-hot vector for action $a$.
A linear predictor $W$ is trained to align the predicted next-state representation $W\psi_\theta(x,a)$ with $\phi_\theta(x')$ using an InfoNCE loss~\cite{oord2018representation,laskin2020curl,zheng2023taco}:
\begin{equation}\label{eq:l_nce}
    \mathcal{L}_{\text{NCE}}(\phi_{\theta}, W)
    = -\log \frac{\exp(\langle \bar{\phi}_{\theta}(x'_{i}), W\bar{\psi}_{\theta}(x_{i}, a_{i}) \rangle)}
    {\sum_{j} \exp(\langle \bar{\phi}_{\theta}(x'_{j}), W\bar{\psi}_{\theta}(x_{i}, a_{i}) \rangle)},
\end{equation}
where $\bar v=v/\|v\|_2$ and negatives are drawn from the batch.
This contrastive loss is an implementation choice rather than a structural requirement of the method.
Other self-supervised objectives for predictive or bootstrap representation learning, such as BYOL-style losses \cite{guo2022byol} or self-predictive auxiliary losses \cite{pathak2019self}, could be used in the same role.
For the policy update, the learned representation supplies both the kernel in \cref{eq:objective} and the state-action features used by the operator estimate.

\paragraph{Step 2: Transition-operator estimation.}
The estimation problem is to approximate the lifted transition operator from sampled transitions.
For a Dirac measure at $(x,a)$, $\trop\delta_{(x,a)}=\tau(\cdot\mid x,a)$.
Since $\mu^\psi_{\delta_{(x,a)}}=\psi(x,a)$, the lifted operator should map $\psi(x,a)$ to $\mu^\phi_{\tau(\cdot\mid x,a)}$, that is, to the conditional mean $\mathbb E[\phi(x')\mid x,a]$.
In the implementation we use $\psi(x,a)=\phi(x)\otimes e_a$, where $e_a$ is the one-hot encoding of the discrete action.
A conditional mean embedding estimates this operator as a linear map from state-action features to next-state embeddings \cite{grunewalder2012modelling,muandet2017kernel}.
Given a dataset $\mathcal D=\{(x_i,a_i,x'_i)\}_{i=1}^n$ and the learned feature map, ROVER estimates
$\mathbb E[\phi(x')\mid x,a]\approx \trop\psi(x,a)$.
The estimator is the regularized least-squares solution of the vector-valued regression problem
\begin{equation}\label{eq:KRR}
    \min_{\trop} ~~ \sum_{i=1}^n
    \|\phi(x'_i)-\trop\psi(x_i,a_i)\|_{\mathcal H}^2
    + \lambda\|\trop\|_{\mathrm{HS}}^2,
\end{equation}
namely
\begin{equation}\label{eq:cme_estimator}
    \hat\trop_n = \Phi_n^\top(K+\lambda \Id)^{-1}\Psi_n.
\end{equation}
Here $\Phi_n$ has ``rows'' $\phi(x'_i)^\top$, $\Psi_n$ has ``rows'' $\psi(x_i,a_i)^\top$, and $K=\Psi_n\Psi_n^\top$ is the state-action Gram matrix.
This is the standard kernel ridge regression estimator of the conditional mean embedding.
Its finite-dimensional sandwich form is what makes the policy update below computable.


\paragraph{Step 3: Policy mirror descent.}
Given an estimated transition operator $\hat \trop_n$, let $R_\pi=(\Id-\gamma\hat\trop_n\polop_\pi)^{-1}$.
Differentiating $\mathcal J(\pi)$ through the resolvent gives the functional gradient
\begin{equation}\label{eq:primal_gradient}
\nabla_{\polop}\mathcal J(\pi)
= 2\gamma(1-\gamma)^2
\hat\trop_n^*R_\pi^*R_\pi\mu_{\nu_0}\otimes R_\pi\mu_{\nu_0},
\end{equation}
which can be employed in first-order optimization methods to minimize the objective function~\cref{eq:objective}. Concretely, the gradient can be evaluated thanks to the kernel trick, as shown in the following theorem, proved in \cref{app:dual_gradient_proof}.
\begin{theorem}[Dual Form of the Gradient]\label{thm:dual_gradient}
Consider a transition operator with ``sandwich'' structure
\begin{equation}\label{eq:sandwich_form}
\hat{\trop}=\Phi^\top G\Psi,
\end{equation}
where $\Phi\in\mathbb R^{n\times d_\phi}$ has rows $\phi(x'_i)^\top$, $\Psi\in\mathbb R^{n\times d_\psi}$ has rows $\psi(x_i,a_i)^\top$, and $G\in\mathbb R^{n\times n}$ is symmetric.
Define $K_\phi=\Phi\Phi^\top$ and $(M_\pi)_{ij}
= \sum_a \pi(a\mid x'_i)\langle \psi(x'_i,a),\psi(x_j,a_j)\rangle$.
If $\mu_0=\Phi^\top\alpha_0$, a direct substitution of~\cref{eq:sandwich_form} into~\cref{eq:primal_gradient} gives
\begin{equation}\label{eq:dual_gradient}
[\nabla_\pi\mathcal J(\pi)](x,a)
= \langle \psi(x,a),\Psi^\top c_\pi\rangle,
\end{equation}
where $c_\pi = 2\gamma(1-\gamma)^2 G(\Id-\gamma M_\pi^\top G)^{-1} K_\phi (\Id-\gamma GM_\pi)^{-1}\alpha_0.$
\end{theorem}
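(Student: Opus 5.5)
The plan is to push the sandwich form \cref{eq:sandwich_form} through the primal gradient \cref{eq:primal_gradient} and use push-through identities to move every resolvent from feature space to $n\times n$ sample space. First I make the lifted policy operator explicit on the range of $\Phi^\top$. For a mean embedding $\mu=\Phi^\top\beta=\sum_i\beta_i\phi(x'_i)$, the lifted $\polop_\pi$ sends $\phi(x'_i)$ to $\sum_a\pi(a\mid x'_i)\psi(x'_i,a)$. Hence
\begin{equation*}
\Psi\polop_\pi\Phi^\top\beta=M_\pi\beta,
\end{equation*}
where $M_\pi$ is exactly the matrix defined in the statement. Consequently $\hat\trop\polop_\pi\Phi^\top=\Phi^\top GM_\pi$, so the range of $\Phi^\top$ is invariant under $\hat\trop\polop_\pi$.

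Next I express the resolvent applied to $\mu_0$ in sample space. Expanding $R_\pi=\sum_t\gamma^t(\hat\trop\polop_\pi)^t$ and using the invariance gives
\begin{equation*}
R_\pi\mu_0=\Phi^\top(\Id-\gamma GM_\pi)^{-1}\alpha_0.
\end{equation*}
Equivalently, one can use the push-through identity $(\Id-\gamma\Phi^\top GM_\pi')^{-1}\Phi^\top=\Phi^\top(\Id-\gamma GM_\pi)^{-1}$, which also covers the case where the Neumann series fails to converge. Call this vector $\Phi^\top\beta$.

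For the adjoint factor I use $\hat\trop^*=\Psi^\top G\Phi$, which relies on $G$ being symmetric. I also need $R_\pi^*=(\Id-\gamma\polop_\pi^*\Psi^\top G\Phi)^{-1}$. Writing $\polop_\pi^*\Psi^\top$ and applying the same push-through argument in the other direction gives $\Phi R_\pi^*\Phi^\top\,\cdot=(\Id-\gamma M_\pi^\top G)^{-1}\Phi\Phi^\top\,\cdot$. The key fact here is that $\Phi\polop_\pi^*\Psi^\top=(\Psi\polop_\pi\Phi^\top)^\top=M_\pi^\top$. Putting this together yields
\begin{equation*}
\hat\trop^*R_\pi^*R_\pi\mu_0=\Psi^\top G(\Id-\gamma M_\pi^\top G)^{-1}K_\phi(\Id-\gamma GM_\pi)^{-1}\alpha_0.
\end{equation*}
This is $\Psi^\top c_\pi$ up to the constant $2\gamma(1-\gamma)^2$.

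Finally I pass from the operator gradient to the policy gradient. The gradient is the rank-one operator $\Psi^\top c_\pi\otimes R_\pi\mu_0$. By the chain rule, its component at $(x,a)$ is obtained by pairing the first factor with $\psi(x,a)$, since perturbing $\pi(a\mid x)$ perturbs $\polop_\pi$ by $\psi(x,a)\otimes\phi(x)$. The second factor pairs with $\phi(x)$ and weights by the occupancy, which matches the state-wise mirror-descent scaling. Dropping that state-dependent weight, as the statement implicitly does, gives $\langle\psi(x,a),\Psi^\top c_\pi\rangle$.

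I expect the main obstacle to be the adjoint step: $\polop_\pi^*$ acts on $\mathcal G$ and does not by itself leave the span of $\Psi^\top$ inside the span of $\Phi^\top$. I will handle it by never forming $R_\pi^*$ alone. Instead I only evaluate $\Psi^\top G\Phi R_\pi^*(\Phi^\top\beta)$ and expand $R_\pi^*$ as a Neumann series, in which every term is sandwiched between $\Phi$ and $\Psi^\top$ and so collapses to powers of $M_\pi^\top G$.
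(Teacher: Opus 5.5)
Your proposal is correct and follows the paper's proof essentially step for step. Like the paper, you fold the resolvent onto the span of $\Phi^\top$ to get $R_\pi\mu_0=\Phi^\top(\Id-\gamma GM_\pi)^{-1}\alpha_0$, use $\hat\trop^*=\Psi^\top G\Phi$, push the adjoint resolvent through to $(\Id-\gamma M_\pi^\top G)^{-1}K_\phi$, and pair the result with $\psi(x,a)$; the paper's Step~3 also discards the factor $\langle\phi(x),R_\pi\mu_0\rangle$, but without comment.

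One small remark on notation. With the indices exactly as written in the statement, $\Psi\polop_\pi\Phi^\top$ equals $M_\pi^\top$, not $M_\pi$. The paper's proof also sets $M_\pi=\Psi\polop_\pi\Phi^\top$, and that is the convention under which the stated $c_\pi$ holds, so your derivation agrees with the paper.
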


Similarly to~\cite{shani2020adaptive,agarwal2021,xiao2022}, ROVER optimizes its objective via Policy Mirror Descent, which guarantees that at each iteration the updated $\pi$ is a valid conditional probability distribution. Since the KRR estimator in \cref{eq:cme_estimator} satisfies \cref{eq:sandwich_form} with $G=(K+\lambda \Id)^{-1}$, the Policy Mirror Descent iterates can be written, leveraging \cref{thm:dual_gradient}, as
\begin{equation}\label{eq:pmd_simplified}
\pi_{t+1}(\cdot\mid x)
= \sigma\left(
\log\pi_0(\cdot\mid x)
-\eta\Big\langle \psi(x,\cdot),
\Psi_n^\top\sum_{s=0}^{t}c_{\pi_s}
\Big\rangle
\right),
\end{equation}
where $\sigma$ is the softmax.
We note that the optimization history is compressed into the vector $\sum_{s=0}^{t}c_{\pi_s}$ of accumulated coefficients.

\subsection{Sink-State Augmentation}\label{sec:sink_state}

The estimator $\hat\trop_n$ in \cref{eq:cme_estimator} is supported by the state-action features observed in the dataset.
For a query $(x,a)$ far from this support, the kernel vector $\Psi_n\psi(x,a)$ is close to zero, and therefore $\hat\trop_n\psi(x,a)\approx 0$.
Since the objective minimizes an RKHS norm, this zero prediction can be incorrectly treated as a low-cost continuation of the dynamics.
In fact, it only indicates that the learned operator has no data with which to predict the next-state embedding.

\Cref{fig:sink_multiroom} (Top row) illustrates the resulting instability.
The diagnostic uses a zero-memory buffer, so the operator is fit only on the most recent batch.
Without an explicit representation of unsupported regions, previously visited rooms can disappear from the support of the next operator fit, and the policy oscillates between expansion and rediscovery.
With a replay buffer this effect is less abrupt, but it will occur eventually through finite capacity, imbalanced sampling, or representation drift.

\begin{figure}[t]
    \centering
    \includegraphics[width=1\linewidth]{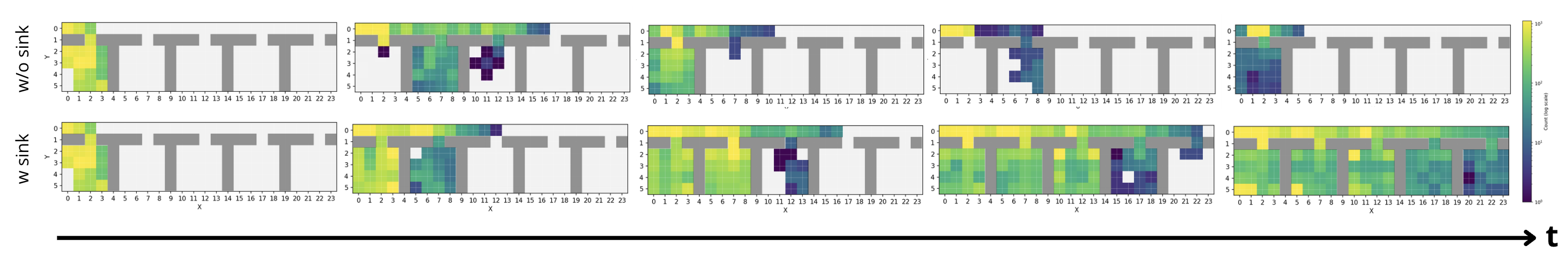}
    \caption{Effect of sink-state augmentation under an intentionally severe zero-buffer update in the multi-room environment. Each column shows a later policy-optimization round, with visitation counts plotted on a log scale. \textbf{Top: without the sink state}, the learned operator is fit only to the most recent batch, so previously covered regions become unsupported and the policy oscillates over the first rooms. \textbf{Bottom: with the sink state}, unsupported mass is routed to a controlled absorbing state, yielding policies that expand while retaining connected coverage of earlier regions.}
    \label{fig:sink_multiroom}
\end{figure}

We therefore augment the state space with a virtual absorbing state $s_{\sink}$ representing the part of the model unsupported by data.
Let $\bar{\spX}=\spX\cup\{s_{\sink}\}$ and define $\bar\phi:\bar{\spX}\to\bar{\mathcal H}=\mathcal H\oplus\mathbb R$ as
\begin{equation}
\bar{\phi}(x) = \begin{cases}
(\phi(x),0) & \text{if } x\in\spX,\\
(0_{\mathcal H},\eps) & \text{if } x=s_{\sink},
\end{cases}
\end{equation}
where $\eps>0$ fixes the norm of one unit of sink occupancy.
We use the absorbing prior $\trop_0=\bar\phi_{\sink}\otimes\mathbf 1$, under the normalization $\langle\mathbf 1,\bar\psi(x,a)\rangle=1$, and estimate the transition operator by the shifted ridge problem
\begin{equation}\label{eq:augmented_cme_opt}
\bar\trop_n
= \trop_0
+ \bigl(\bar\Phi_n^\top-\bar\phi_{\sink}\mathbf 1_n^\top\bigr)
(K+\lambda \Id)^{-1}\bar\Psi_n.
\end{equation}
Here $\bar\Phi_n$ and $\bar\Psi_n$ contain the observed augmented next-state and state-action embeddings, and
$K=\bar\Psi_n\bar\Psi_n^\top$.

\begin{proposition}[Prediction of the augmented estimator]\label{prop:augmented_prediction}
Let $\mathbf k(x,a)=\bar\Psi_n\bar\psi(x,a)$ and
$\boldsymbol\alpha(x,a)=(K+\lambda \Id)^{-1}\mathbf k(x,a)$.
Then the estimator in \cref{eq:augmented_cme_opt} satisfies
\begin{equation}
\bar\trop_n\bar\psi(x,a)
= \bar\Phi_n^\top\boldsymbol\alpha(x,a)
+ \bigl(1-\mathbf 1_n^\top\boldsymbol\alpha(x,a)\bigr)\bar\phi_{\sink}.
\end{equation}
In particular, if $\mathbf k(x,a)\approx 0$, then
$\bar\trop_n\bar\psi(x,a)\approx \bar\phi_{\sink}$.
\end{proposition}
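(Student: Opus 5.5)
The identity follows by applying both terms of \cref{eq:augmented_cme_opt} to $\bar\psi(x,a)$ and using linearity. The approximate claim then follows by continuity.

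\emph{Prior term.} The prior is the rank-one operator $\trop_0=\bar\phi_{\sink}\otimes\mathbf 1$, which acts as $v\mapsto \langle \mathbf 1, v\rangle\,\bar\phi_{\sink}$. Under the standing normalization $\langle\mathbf 1,\bar\psi(x,a)\rangle=1$, we get $\trop_0\bar\psi(x,a)=\bar\phi_{\sink}$.

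\emph{Correction term.} Since $\mathbf k(x,a)=\bar\Psi_n\bar\psi(x,a)$ and $\boldsymbol\alpha(x,a)=(K+\lambda\Id)^{-1}\mathbf k(x,a)$, we have
\begin{equation*}
\bigl(\bar\Phi_n^\top-\bar\phi_{\sink}\mathbf 1_n^\top\bigr)(K+\lambda\Id)^{-1}\bar\Psi_n\bar\psi(x,a)
=\bar\Phi_n^\top\boldsymbol\alpha(x,a)-\bigl(\mathbf 1_n^\top\boldsymbol\alpha(x,a)\bigr)\bar\phi_{\sink}.
\end{equation*}
Adding the prior contribution $\bar\phi_{\sink}$ gives exactly
\begin{equation*}
\bar\trop_n\bar\psi(x,a)=\bar\Phi_n^\top\boldsymbol\alpha(x,a)+\bigl(1-\mathbf 1_n^\top\boldsymbol\alpha(x,a)\bigr)\bar\phi_{\sink}.
\end{equation*}

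\emph{Limiting statement.} For fixed $\lambda>0$, the matrix $(K+\lambda\Id)^{-1}$ is bounded, with operator norm at most $1/\lambda$ because $K$ is positive semidefinite. Hence $\|\boldsymbol\alpha(x,a)\|\le \|\mathbf k(x,a)\|/\lambda$, so $\boldsymbol\alpha(x,a)\to 0$ as $\mathbf k(x,a)\to 0$. It follows that $\bar\Phi_n^\top\boldsymbol\alpha(x,a)\to0$ and $\mathbf 1_n^\top\boldsymbol\alpha(x,a)\to 0$. Quantitatively,
\begin{equation*}
\bigl\|\bar\trop_n\bar\psi(x,a)-\bar\phi_{\sink}\bigr\|\le \frac{\|\mathbf k(x,a)\|}{\lambda}\Bigl(\|\bar\Phi_n\|_{\mathrm{op}}+\eps\sqrt n\Bigr).
\end{equation*}
This makes the informal ``$\approx$'' in the statement precise, since $\bar\Phi_n$ is a fixed finite matrix once the data are fixed.

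The only step needing care is interpreting $\trop_0\bar\psi(x,a)$ as $\langle\mathbf 1,\bar\psi(x,a)\rangle\,\bar\phi_{\sink}$, which requires the normalization hypothesis. In the implementation this normalization holds because $\phi_\theta$ is $L_1$-normalized and nonnegative and $\bar\psi=\bar\phi\otimes e_a$. The remaining steps are direct linear algebra.
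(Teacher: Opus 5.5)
Your proposal is correct and follows the paper's proof: evaluate the shifted ridge estimator at $\bar\psi(x,a)$, use the normalization $\langle\mathbf 1,\bar\psi(x,a)\rangle=1$ to reduce the prior term $\trop_0\bar\psi(x,a)$ to $\bar\phi_{\sink}$, and collect terms. You also add an explicit bound for the ``in particular'' clause, using $\|(K+\lambda\Id)^{-1}\|\le 1/\lambda$, which makes precise a step the paper leaves informal.
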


\begin{proof}
Evaluating \cref{eq:augmented_cme_opt} at $\bar\psi(x,a)$ gives
\[
\bar\trop_n\bar\psi(x,a)
=\trop_0\bar\psi(x,a)
+(\bar\Phi_n^\top-\bar\phi_{\sink}\mathbf 1_n^\top)
(K+\lambda \Id)^{-1}\mathbf k(x,a).
\]
Since $\trop_0=\bar\phi_{\sink}\otimes\mathbf 1$ and
$\langle\mathbf 1,\bar\psi(x,a)\rangle=1$, the first term is
$\bar\phi_{\sink}$.
Collecting terms gives the claim.
\end{proof}

The coefficient $1-\mathbf 1_n^\top\boldsymbol\alpha(x,a)$ is the part of the prediction not explained by the training inputs.
The augmented estimator routes this residual to $s_{\sink}$ rather than to the zero vector.
The bottom row of \cref{fig:sink_multiroom} shows the corresponding effect on the learned policy sequence.

\begin{proposition}[Finite-dimensional structure]\label{prop:augmented_sandwich_main}
Under the normalization above, define
\[
    \widetilde\Phi_n^\top
    =
    \begin{bmatrix}
    \bar\Phi_n^\top-\bar\phi_{\sink}\mathbf 1_n^\top & \bar\phi_{\sink}
    \end{bmatrix},
    \qquad
    \widetilde\Psi_n
    =
    \begin{bmatrix}
    \bar\Psi_n\\
    \mathbf 1^\top
    \end{bmatrix},
    \qquad
    \widetilde G
    =
    \mathrm{diag}((K+\lambda \Id)^{-1},1).
\]
Then the augmented estimator admits the sandwich form
\begin{equation}
    \bar\trop_n=\widetilde\Phi_n^\top\widetilde G\widetilde\Psi_n.
\end{equation}
Moreover, the policy-induced matrix associated with the augmented operator is $\widetilde M_\pi=\mathrm{diag}(M_\pi,1)$, where $M_\pi$ is defined as in~\Cref{thm:dual_gradient}.
Consequently, the dual gradient formula in \cref{thm:dual_gradient} applies to the augmented model after replacing $(\Phi,\Psi,G,M_\pi)$ by $(\widetilde\Phi,\widetilde\Psi,\widetilde G,\widetilde M_\pi)$.
\end{proposition}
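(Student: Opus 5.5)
The plan has three steps: expand the block product to get the sandwich form, compute the augmented policy matrix entry by entry, and then check that the proof of \cref{thm:dual_gradient} goes through unchanged with the new matrices.

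\textbf{Step 1: the sandwich form.} Multiply out the block product. Since $\widetilde G$ is block diagonal, we get
\[
\widetilde\Phi_n^\top\widetilde G\widetilde\Psi_n=(\bar\Phi_n^\top-\bar\phi_{\sink}\mathbf 1_n^\top)(K+\lambda\Id)^{-1}\bar\Psi_n+\bar\phi_{\sink}\mathbf 1^\top .
\]
The last term is exactly $\trop_0=\bar\phi_{\sink}\otimes\mathbf 1$, so the sum reproduces \cref{eq:augmented_cme_opt}. The same computation, applied to an arbitrary input $\bar\psi(x,a)$, gives back \cref{prop:augmented_prediction}, which serves as a consistency check. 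We also note that $\widetilde G$ is symmetric, because $K$ is symmetric; this is the symmetry hypothesis required by \cref{thm:dual_gradient}.

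\textbf{Step 2: the policy-induced matrix.} Compute $\widetilde M_\pi$ entrywise from its definition in \cref{thm:dual_gradient}, using the augmented "next-state" points and the augmented "input" rows. The first $n$ rows of $\widetilde\Psi_n$ are $\bar\psi(x_j,a_j)$, and the last row is $\mathbf 1$. Each next-state point pairs with a policy-averaged feature $\sum_a\pi(a\mid\cdot)\bar\psi(\cdot,a)$. This gives the four blocks as follows.
\begin{itemize}
\item The $(i,j)$ block with $i,j\le n$ is $M_\pi$.
\item Pairing an observed next state $x'_i$ with the $\mathbf 1$ row gives $\sum_a\pi(a\mid x'_i)\langle\mathbf 1,\bar\psi(x'_i,a)\rangle=1$. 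We must check how this entry enters the recursion.
\item The sink contributes the last index. Its feature is $\bar\phi_{\sink}=(0,\eps)$, so after lifting, $\bar\psi(s_{\sink},a)$ is orthogonal to all data features. This makes the entries $\langle\bar\psi(s_{\sink},\cdot),\bar\psi(x_j,a_j)\rangle=0$, while the pairing with $\mathbf 1$ equals $1$ by normalization.
\end{itemize}

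\textbf{Main obstacle.} The difficulty is showing that the off-diagonal block is harmless. A naive count gives $\widetilde M_\pi$ block upper triangular, with a column of ones rather than an exact $\mathrm{diag}(M_\pi,1)$. The resolution is that the "next-state" columns of $\widetilde\Phi_n^\top$ are the shifted ones $\bar\phi(x'_i)-\bar\phi_{\sink}$. Their $\mathbf 1$-pairings are therefore $1-1=0$: the policy-averaged feature of a shifted point has unit mass minus unit mass. I would make this precise by writing $\polop_\pi$ applied to $\widetilde\Phi_n^\top$ columnwise, lifting $\bar\phi\mapsto\sum_a\pi(a\mid\cdot)\bar\psi(\cdot,a)$ linearly. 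The map sends $\bar\phi_{\sink}$ to $\bar\psi_{\sink}$, whose pairing with the data rows vanishes by orthogonality. Once this is done, the product $\widetilde\Psi_n\polop_\pi\widetilde\Phi_n^\top$ equals $\mathrm{diag}(M_\pi,1)$.

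\textbf{Step 3: applying the dual gradient.} Check that the proof of \cref{thm:dual_gradient} used only three ingredients: the factorization $\hat\trop=\Phi^\top G\Psi$, the identity $\Psi\polop_\pi\Phi^\top=M_\pi$, and the representation $\mu_0=\Phi^\top\alpha_0$. The resolvent expansion then gives $R_\pi\Phi^\top=\Phi^\top(\Id-\gamma GM_\pi)^{-1}$ and its adjoint analogue. Substituting the tilde quantities yields the claimed formula, with $\alpha_0$ replaced by its augmented coefficient vector. Here the initial embedding is expressed in the span of the columns of $\widetilde\Phi_n^\top$, which is possible because that span includes $\bar\phi_{\sink}$ and therefore the unshifted $\bar\phi(x'_i)$ as well.
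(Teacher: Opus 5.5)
Your proposal is correct and follows essentially the paper's own argument in \cref{prop:augmented_sandwich}. The sandwich form is read off from the block product, and $\widetilde M_\pi=\mathrm{diag}(M_\pi,1)$ is obtained block by block: orthogonality of the lifted sink feature to the data features removes both the correction term in the $M_\pi$ block and the data--sink block, and the normalization gives the $1-1=0$ cancellation in the $\mathbf 1$-row. Your explicit check that the proof of \cref{thm:dual_gradient} uses only the factorization with symmetric $G$, the identity for $M_\pi$, and $\mu_0=\Phi^\top\alpha_0$ (with augmented coefficients $(\alpha_0,\mathbf 1_n^\top\alpha_0)$) is more careful than the paper's ``applies directly''.
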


The detailed derivation of \cref{prop:augmented_sandwich_main} is given in \cref{prop:augmented_sandwich}.
The block-diagonal form leaves the PMD algebra unchanged but augments the learned model with an explicit unsupported component.

\begin{theorem}[Sink-state objective decomposition]\label{thm:sink_resolvent_decomp}
Let $\bar\mu_0=(\mu_0,0)$ be the initial embedding with no sink mass, and let
$\bar\polop_\pi$ be an extension of $\polop_\pi$ to $\bar{\spX}$ with the absorbing sink block described above.
Assume that $I-\gamma\bar\trop_n\bar\polop_\pi$ is invertible, and let
\[
    \bar\mu(\pi)
    =
    (1-\gamma)(I-\gamma\bar\trop_n\bar\polop_\pi)^{-1}\bar\mu_0
    \in\bar{\mathcal H}
\]
be the augmented discounted occupancy embedding.
Let $P_{\mathcal H}$ be the orthogonal projection from
$\bar{\mathcal H}=\mathcal H\oplus\mathbb R$ onto $\mathcal H$, and define
\[
    \hat\mu(\pi)=P_{\mathcal H}\bar\mu(\pi),
    \qquad
    m_{\sink}(\pi)
    =
    \frac{\langle \bar\mu(\pi),\bar\phi_{\sink}\rangle_{\bar{\mathcal H}}}
    {\|\bar\phi_{\sink}\|^2_{\bar{\mathcal H}}}.
\]
Then
\begin{equation}\label{eq:sink_decomposition}
    \|\bar\mu(\pi)\|^2_{\bar{\mathcal H}}
    =
    \|\hat\mu(\pi)\|^2_{\mathcal H}
    + \eps^2 m_{\sink}(\pi)^2.
\end{equation}
\end{theorem}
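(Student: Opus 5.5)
The decomposition is essentially Pythagoras in $\bar{\mathcal H}=\mathcal H\oplus\mathbb R$. The plan is to show that the sink coordinate of $\bar\mu(\pi)$ equals $\eps\, m_{\sink}(\pi)$. The dynamics play no role beyond guaranteeing that $\bar\mu(\pi)$ is a well-defined element of $\bar{\mathcal H}$, which follows from the assumed invertibility of $I-\gamma\bar\trop_n\bar\polop_\pi$.

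First, I will write $\bar\mu(\pi)=(h,r)$ with $h\in\mathcal H$ and $r\in\mathbb R$. This is possible for any element of the direct sum. By definition of the orthogonal projection, $\hat\mu(\pi)=P_{\mathcal H}\bar\mu(\pi)=h$, and by the direct-sum inner product,
\[
\|\bar\mu(\pi)\|^2_{\bar{\mathcal H}}=\|h\|^2_{\mathcal H}+r^2 .
\]

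Second, I will compute $m_{\sink}(\pi)$. Since $\bar\phi_{\sink}=(0_{\mathcal H},\eps)$, we have $\|\bar\phi_{\sink}\|^2_{\bar{\mathcal H}}=\eps^2$ and $\langle\bar\mu(\pi),\bar\phi_{\sink}\rangle_{\bar{\mathcal H}}=\eps r$. Hence $m_{\sink}(\pi)=r/\eps$, which is well defined because $\eps>0$. This gives $r^2=\eps^2 m_{\sink}(\pi)^2$, and substituting into the first step yields \cref{eq:sink_decomposition}.

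The only point I would check with care is interpretive rather than technical. The identity holds for any vector in $\bar{\mathcal H}$, so no structural property of $\bar\trop_n$ or $\bar\polop_\pi$ (such as the block-diagonal $\widetilde M_\pi$ from \cref{prop:augmented_sandwich_main}) is needed. I would add a remark that $m_{\sink}(\pi)$ is the coefficient of $\bar\mu(\pi)$ along $\bar\phi_{\sink}$. By \cref{prop:augmented_prediction}, the sink coordinate accumulates the residual weights $1-\mathbf 1_n^\top\boldsymbol\alpha$ scaled by $\eps$, so $m_{\sink}$ can be read as the discounted mass routed to the sink. That reading is only motivation and is not required for the equality.
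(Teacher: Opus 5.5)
Your proof is correct and matches the paper's argument. Both reduce the claim to the orthogonal decomposition $\bar\mu(\pi)=\hat\mu(\pi)+m_{\sink}(\pi)\,\bar\phi_{\sink}$ in $\mathcal H\oplus\mathbb R$, use $\|\bar\phi_{\sink}\|^2_{\bar{\mathcal H}}=\eps^2$, and apply Pythagoras; your coordinate computation $m_{\sink}(\pi)=r/\eps$ simply makes that decomposition explicit, and you are right that nothing about the dynamics is used beyond $\bar\mu(\pi)$ being well defined.
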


\begin{proof}
By construction $\bar\phi_{\sink}\in\{0_{\mathcal H}\}\oplus\mathbb R$ and
$\|\bar\phi_{\sink}\|^2_{\bar{\mathcal H}}=\eps^2$.
The decomposition
$\bar\mu(\pi)=\hat\mu(\pi)+m_{\sink}(\pi)\bar\phi_{\sink}$
is orthogonal, hence the Pythagorean identity gives \cref{eq:sink_decomposition}.
\end{proof}

Thus $\eps$ controls the relative cost of assigning occupancy to unsupported regions.
Small values make unsupported continuations cheap; large values make the policy optimization conservative.
During pretraining, the sink term is used only to regularize the learned operator and policy update.
After pretraining, the transferred object is the learned representation and policy.

To avoid the $\mathcal O(n^3)$ cost of full kernel inversion, the implementation uses a Nystrom approximation \cite{williams2000nystrom,rudi2015less} with $m\ll n$ anchor points from the replay buffer, reducing the dominant matrix inversion cost to $\mathcal O(m^3)$ per update.
A simplified overview and the matrix-level implementation details are given in \cref{alg:rover_simple,alg:dist_matching}.

\section{Experiments}

Since the primary contribution of this work is the derivation of a stable, closed-form exploration objective~\cref{eq:objective}, our experimental section was designed as a controlled diagnostic probe rather than a benchmark of raw capability. 
We engineered these environments from scratch to isolate specific topological pathologies, as bottlenecks or hard-to-explore subspaces, that induce the oscillatory failures affecting standard reward-free methods. This granular control allows us to directly visualize the collapse and expansion of the occupancy measure $\mu(\pi)$ in the RKHS, which is impossible in high-dimensional latent spaces. 
Further, such a setting helped to verify the role of the introduced augmented state by observing the mass transfer to the sink state $s_e$, and demonstrate the stability in settings where the non-augmented density-based approach is shown to diverge (\cref{fig:sink_multiroom}). 

We evaluate our method's ability to efficiently explore and keep a covering behaviour on a suite of fully observable, hard-to-explore MiniGrid-style environments \cite{chevalier2023minigrid} (\cref{app:envs-info}). Here, we consider mainly three configurations: the \textit{Middle-Room} \cref{fig:middle-room}, a room in the middle connected with corridors to rooms on each side, \textit{Multiple Rooms}, a series of 5 rooms connected by a corridor, and the \textit{Maze} configuration \cref{fig:maze_examples}, available in \footnote{Code and hyper parameters available \href{https://github.com/marcopra/ROVER}{github.com/marcopra/ROVER}.}, while other environmental architectures are considered in the appendix. We illustrate a snapshot of these environments in \cref{app:envs-info}.

All the configurations offer a tabular or a pixel-based observation, with discrete actions ($|\spA|=4$), and the downstream task employs a sparse reward structure ($-1$ per step until the goal) to encourage shortest-path discovery.  

\subsection{Behavior Induced by Reward-Free Objectives} \label{sec:induced-behaviour}

We analyze the behavior induced by different reward-free pretraining objectives in the \textit{Middle Room} environment. 
The observation is a one-hot encoding of the discrete state, although \algo{} still learns a representation using \cref{eq:l_nce}. 
We compare \algo{} against Random Network Distillation (RND)~\citep{burda2018rnd}, State Marginal Matching (SMM)~\citep{lee2019smm}, Unsupervised Active Pre-Training (APT)~\citep{liu2021apt}, MAXENT~\citep{hazan2019maxent}, and Contrastive Intrinsic Control (CIC)~\citep{laskin2022CIC}.

In \cref{fig:behaviour_plus_replay_buffer}, we visualize each method using $50$ sampled trajectories from a representative checkpoint, selected either near the point where feasible state-space coverage is achieved or near the end of pretraining. 
The snapshots highlight the difference between \emph{discovering} states during pretraining and obtaining a final policy whose occupancy broadly covers the state space. Several baselines visit different regions over the course of training, but their policy snapshots often concentrate on one region at a time (\cref{fig:induced_behaviour}). 
Thus, the replay buffer may be diverse, while the final transferred behavior is not.

\begin{figure}
    \centering
    \includegraphics[width=1.02\linewidth]{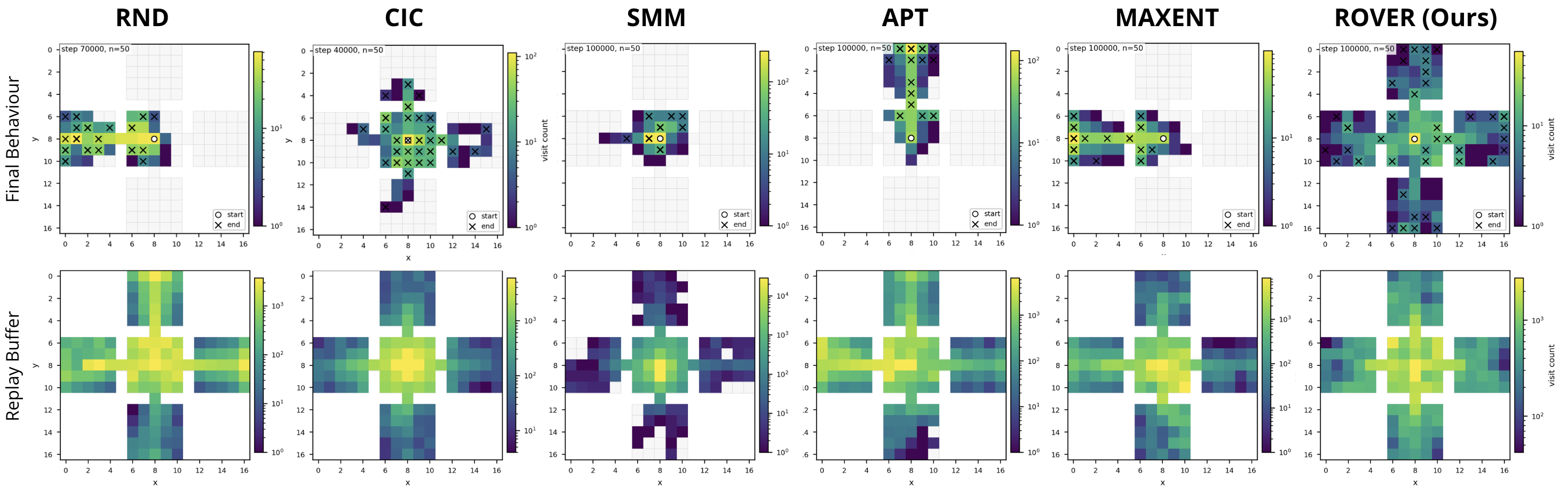}
    \caption{\textbf{Top: Behaviour of the resulting policy} For each method, we sample $50$ trajectories from a representative checkpoint during pretraining, selected either near full feasible state-space coverage or near the end of the pretraining window. \textbf{Bottom: Samples collected during pretraining}, we visualize the entire dataset collected by each method.}
    \label{fig:behaviour_plus_replay_buffer}
\end{figure}

This distinction matters in sparse-reward transfer. 
If the downstream reward can be evaluated offline, pretraining data can in principle be relabeled and reused. 
However, in many sparse-reward tasks, success is an interaction-time event observed only when the agent actually reaches the task-dependent goal. 

Oscillatory behavior in intrinsic-reward methods has been observed for methods such as SMM and RND by~\cite{lee2019smm}, and often stems from the non-stationarity of the exploration signal. In prediction-error methods such as RND~\cite{burda2018rnd}, unvisited states are rewarding until they become predictable, after which the policy is pushed toward other regions. Previously visited regions can later become attractive again due to function-approximation error, forgetting, or distribution shift, producing cycles of expansion and rediscovery.

Entropy-, density-, and marginal-matching methods such as APT, MAXENT, and SMM can exhibit a related effect. They encourage visitation of underrepresented states, but the set of underrepresented states changes as the replay distribution evolves. As a result, these methods may produce diversity in the data collection without ensuring that the final policy induces balanced aggregate coverage, especially in bottlenecked environments.

Skill-based methods, such as CIC, learn different behaviors by conditioning the policy on a latent skill. This can help the agent explore, but it does not directly force the final behavior to visit all reachable regions evenly. 
Even if we sample different skills at evaluation time, the resulting trajectories may still spend too much time in some regions and too little in others, especially when rooms are connected by narrow bottlenecks. Therefore, learning distinct skills can lead to diverse behaviors, but it does not necessarily produce the balanced coverage that ROVER explicitly optimizes.

Overall, these observations suggest that common reward-free objectives can be effective for discovery, but do not directly optimize the property needed for behavior transfer: a final policy whose discounted occupancy is broadly distributed over the reachable state space. 
By contrast, \algo{} directly targets occupancy coverage and progressively expands the visited state space while retaining previously discovered regions.

\subsection{Full Coverage in Controlled Settings}
\begin{figure}
    \centering
    \includegraphics[width=1\linewidth]{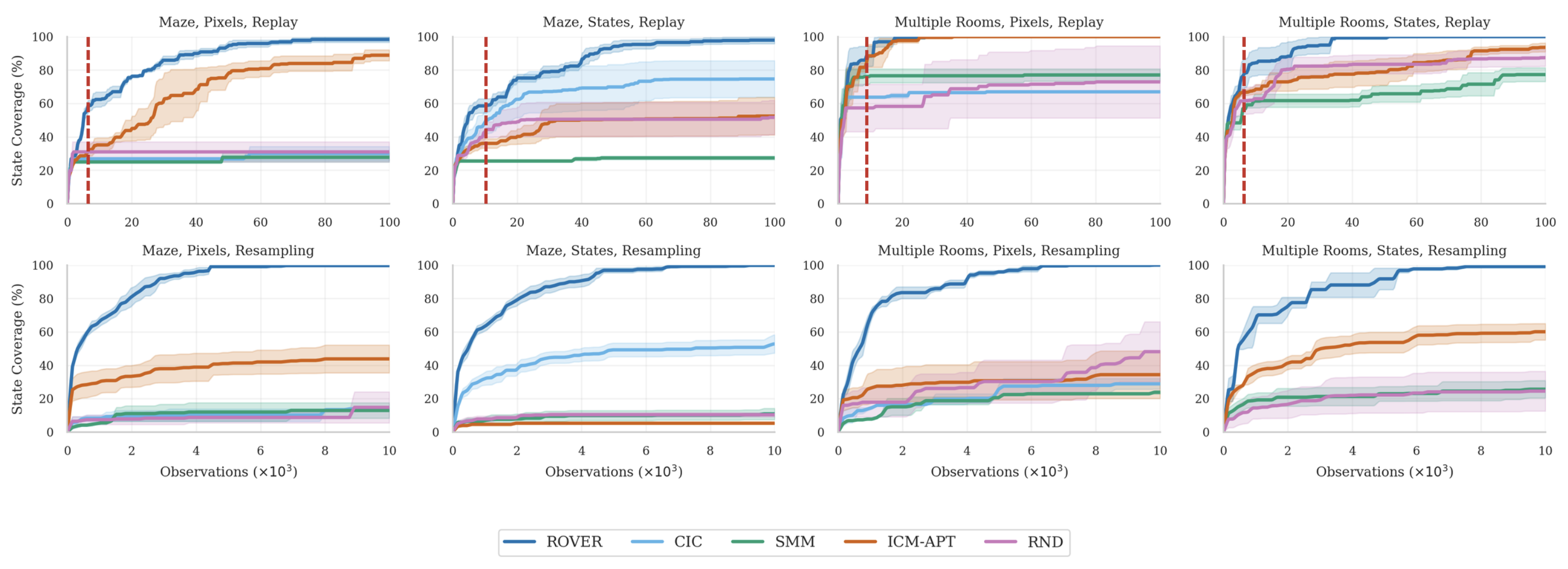}
    \caption{State-space coverage sample efficiency in \textit{Multi-Rooms} and \textit{Maze}. On the x-axis, the observations are obtained by sampling from the environment. 
\textbf{Top}: pretraining samples required to reach full coverage. We displayed values until $100 \times 10^3$. The red-dotted line shows when the full coverage is reached by resampling with ROVER's learned policy.
\textbf{Bottom}: samples required to reach full coverage by resampling from the final policy. We displayed values until $10 \times 10^3$, an order of magnitude less than exploring from scratch. 
}
    \label{fig:resampliong}
\end{figure}
In this section, we compare the number of samples required by \algo{} and the baselines to achieve full state-space coverage during reward-free pretraining in the \textit{Multi-Rooms} and \textit{Maze} environments, considering both state-based and pixel-based observations (\cref{fig:resampliong}). The results show that \algo{} reaches full coverage substantially faster than the baselines when run from scratch.

The learned policy can also be used after pretraining to efficiently resample trajectories. In both environments, resampling from the final \algo{} policy reduces the number of samples required to cover the state space by approximately one order of magnitude in each environment and setting. Thus, \algo{} provides two complementary benefits: it improves exploration efficiency during pretraining, and it produces a final policy that remains effective for state-space coverage.

This property is particularly useful in multi-task settings. A single pretrained policy can be reused across different downstream tasks in the same environment, so the pretraining cost should be viewed as an upfront investment and amortized over the number of fine-tuning tasks. We provide a more detailed analysis in \cref{app:overhead}. Such exploratory coverage can also remain useful under stochastic rewards, where the agent must repeatedly sample the environment to infer the underlying reward distribution, as in multi-armed bandit problems.

Overall, these results show that \algo{} not only explores the state space efficiently during pretraining, but also preserves a covering behavior at the end of training. This is the central property targeted by our objective. While the main focus of this section is therefore on the exploration behavior induced by \algo{}, we also demonstrate its compatibility with downstream model-free methods in \cref{sec:tabular-exps,app:model-free_init}.

\section{Conclusion}
\label{sec:conclusion}

We introduced \algo{}, a reward-free pretraining method for sparse-reward settings where downstream rewards are not available for offline relabeling. In this regime, the main transferable object is the behavior induced by the pretrained policy. We therefore formulate reward-free pretraining as \emph{occupancy coverage}: learning a policy whose discounted occupancy is broadly distributed over the reachable state space.

Our main contribution is a theoretically and empirically motivated objective for behavioral transfer. By minimizing the self-similarity of the discounted occupancy measure in an RKHS, the objective recovers uniform coverage in the finite tabular case and encourages space-filling behavior in learned representation spaces. Further, we describe \algo{}, a concrete algorithm that optimizes this objective, using a learned resolvent world model and a sink-state augmentation that stabilizes exploration by balancing expansion into unseen regions with retention of previously covered states.

Empirically, \algo{} induces broader and more stable coverage than standard reward-free baselines in controlled sparse-navigation tasks, and provides effective initializations for downstream model-free RL. The stability of the algorithm is ensured thanks to the introduction of a virtual ``sink state'', which allows for controlling the exploration-exploitation trade-off. Our current study is intentionally limited to discrete action spaces and simple environments. This choice allows us to isolate the coverage objective and analyze its behavior directly, rather than presenting \algo{} as a fully scalable exploration algorithm.

Future work should study scalable representations that make occupancy coverage meaningful in high-dimensional domains, approximate implementations of the resolvent objective, and extensions to continuous actions, partial observability, and larger benchmarks. We view this work as a first step toward establishing occupancy coverage as a principled objective for reward-free behavioral transfer.

\bibliographystyle{unsrtnat}
\bibliography{bibliography}
\newpage
\appendix

\section{Connections to Information Geometry and Potential Theory} \label{app:connections}

\subsection{Connection to Rényi Entropy and Diversity.} \label{app:Reniy_diversity}
Our use of a Reproducing Kernel Hilbert Space (RKHS) naturally equips the state space with a notion of similarity, quantified by the kernel function $k(x, y) = \langle \phi(x), \phi(y) \rangle_{\mathcal{H}}$. Following the framework of~\cite{leinster2021maximum}, we use this similarity to define the \textit{typicality} of a state $x$ with respect to the occupancy measure $d(\pi)$ as the expected similarity to a random sample: 
\begin{equation*}
    \kappa(x) = \int k(x, y) \, d(\pi)(dy).
\end{equation*}
This measure of local concentration allows us to define the global ``spread" of the distribution. The \textit{diversity of order $q$} is defined as the generalized mean (of order $1-q$) of the inverse typicality (or ``atypicality"):
\begin{equation}
    D_q(d(\pi)) = \left( \int_{\spX} \left( \frac{1}{\kappa(x)} \right)^{1-q} d(\pi)(dx) \right)^{\frac{1}{1-q}}.
\end{equation}
In the specific case of $q=2$, this expression simplifies to the harmonic mean of the atypicality, which is the reciprocal of the expected similarity:
\begin{equation}\label{eq:diversity2}
    D_2(d(\pi)) = \frac{1}{\int_{\spX} \kappa(x) \, d(\pi)(x)} = \frac{1}{\|\mu(\pi)\|_{\mathcal{H}}^2}.
\end{equation}
Consequently, minimizing our objective $\|\mu(\pi)\|_{\mathcal{H}}^2$ is mathematically equivalent to maximizing the diversity of the state occupancy distribution, as measured by~\cref{eq:diversity2}. Furthermore, the logarithm of this quantity, $H_2(d(\pi)) = \log D_2(d(\pi))$, corresponds exactly to the \textit{Rényi entropy of order 2} (collision entropy) and as shown by~\cite{leinster2021maximum}, maximizing this entropy provides a canonical definition of the ``uniform distribution" in general metric spaces where the standard Lebesgue measure may not apply, thereby justifying our objective as a generalized uniformity-matching strategy.

\subsection{Connection to Potential Theory.}\label{app:potential_theory}
Our objective $\mathcal{J}(\pi)$ is also formally equivalent to the energy integral in potential theory~\cite{landkof1972foundations}. Interpreting the kernel $k(x,y)$ as a repulsive potential between $x$ and $y$, minimizing $\|\mu(\pi)\|_{\mathcal{H}}^2$ corresponds to finding the minimum energy configuration of the probability mass $d(\pi)$.

The coverage properties of the solution of such a problem have been studied for a number of different kernel (that is, potential) functions.
For singular potentials (e.g., Riesz kernels $k(\| x -  y\|)=\|x - y \|^{-s}$), the solution (equilibrium measure) is theoretically guaranteed to be the uniform distribution on the domain~\cite{landkof1972foundations, hardin2005minimal}.

\begin{example}[Diffusive vs. Ballistic Transport]
\label{ex:transport}
To illustrate how energy minimization drives exploration, we analyze a canonical 1D exploration problem. Consider an agent on the semi-infinite integer lattice $\mathbb{Z}_{\ge 0}$ starting at $x_0=0$. We compare the Riesz energy $\mathcal{J}_s(\pi)$ (using kernel $k(x,y) = |x-y|^{-s}$ for $s > 0$) of two distinct policies in the limit of long horizons ($\gamma \to 1$):
\begin{enumerate}
    \vspace{-.3truecm}
    \item \textbf{Diffusive Policy ($\pi_{diff}$):} A uniform random walk ($a \sim \text{Bernoulli}(0.5)$), reflecting at the origin.
        \vspace{-.1truecm}
    \item \textbf{Ballistic Policy ($\pi_{bal}$):} A directed policy ($a = +1$) moving constantly to the right.
        \vspace{-.3truecm}
\end{enumerate}
Intuitively, $\pi_{bal}$ achieves superior coverage because it strictly avoids backtracking, whereas $\pi_{diff}$ wastes time going backward half of the time. In \cref{app:energy-proof}, we derive the characteristic length $L$ of the state occupancy for both policies, showing that coverage scales as $L_{bal} \propto \epsilon^{-1}$ versus $L_{diff} \propto \epsilon^{-1/2}$, where $\epsilon = 1-\gamma$.

\begin{restatable}[Energy Scaling]{proposition}{transportProp}
\label{prop:transport_scaling}
As $\gamma \to 1$, the energies of the diffusive and ballistic policies scale as:
\begin{equation}
    \mathcal{J}_s(\pi_{diff}) \propto \epsilon^{s/2} \quad \text{and} \quad \mathcal{J}_s(\pi_{bal}) \propto \epsilon^{s}.
\end{equation}
Since $\epsilon$ is small and $s > 0$, we have $\epsilon^s \ll \epsilon^{s/2}$, implying $\mathcal{J}_s(\pi_{bal}) \ll \mathcal{J}_s(\pi_{diff})$.
\end{restatable}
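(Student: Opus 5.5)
We treat both occupancies as discrete measures on $\mathbb{Z}_{\ge 0}$ and reduce each energy to a scaling computation. Since the Riesz kernel is singular on the diagonal, we read $\mathcal J_s(\pi)=\sum_{x\neq y}|x-y|^{-s}d^\pi(x)d^\pi(y)$. Equivalently, we take the continuum (coarse-grained) limit in which the occupancy has a density on a scale $L\to\infty$. The self-interaction term $\sum_x d^\pi(x)^2$ is of order $1/L$; whether it is kept or dropped, for $0<s<1$ it is dominated by the off-diagonal part, which is the regime where the stated scaling is meaningful. We will make this convention explicit.

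The key step is a general scaling lemma. Suppose $d^\pi(x)=L^{-1}f(x/L)+o(L^{-1})$ for a fixed profile $f$ with
\[
E_s(f)=\iint |u-v|^{-s}f(u)f(v)\,du\,dv<\infty
\]
(finite for $s<1$ and bounded $f$). Then substituting $x=Lu$, $y=Lv$ in the double sum and passing from the Riemann sum to the integral gives $\mathcal J_s(\pi)\sim L^{-s}E_s(f)$. The argument is dominated convergence for the Riemann sum away from the diagonal, together with a separate bound on the near-diagonal strip $|x-y|\le \delta L$. That strip contributes at most $\|f\|_\infty^2 L^{-s}\cdot O(\delta^{1-s})$, which vanishes as $\delta\to 0$.

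We then identify $L$ and $f$ for each policy.

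For the ballistic policy, $x_t=t$, so $d(x)=(1-\gamma)\gamma^x=\epsilon(1-\epsilon)^x$. With $L=\epsilon^{-1}$ this gives $f(u)=e^{-u}$, and hence $\mathcal J_s(\pi_{bal})\sim \epsilon^{s}E_s(e^{-u})$.

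For the diffusive policy, we write $d(x)=\sum_t \epsilon(1-\epsilon)^t P(|S_t|=x)$, using the fact that the reflected walk has the law of $|S_t|$ (up to a parity/boundary correction at the origin, which does not affect the leading order). By the local CLT, $P(|S_t|=x)\approx 2(2\pi t)^{-1/2}e^{-x^2/2t}$. Summing against the geometric weights, and approximating the sum by the integral $\int_0^\infty \epsilon e^{-\epsilon t}(\cdot)\,dt$, yields the Laplace-transform identity
\[
d(x)\approx\sqrt{2\epsilon}\,e^{-\sqrt{2\epsilon}\,x}.
\]
Thus $L=(2\epsilon)^{-1/2}$ and $f(u)=e^{-u}$, giving $\mathcal J_s(\pi_{diff})\sim (2\epsilon)^{s/2}E_s(e^{-u})$. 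The comparison $\epsilon^s\ll\epsilon^{s/2}$ follows immediately.

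The main obstacle is making the diffusive profile rigorous: establishing the uniform approximation $d(x)=L^{-1}f(x/L)(1+o(1))$, including the contribution of small $t$ where the local CLT is poor. The first thing to try is to avoid the CLT altogether. The occupancy solves the exact resolvent equation $d=\epsilon\delta_0+(1-\epsilon)Pd$ for the reflected walk, and this can be solved explicitly: away from the origin $d(x)=C r^x$, where $r$ is the root of $(1-\epsilon)(r+r^{-1})/2=1$, so $1-r\sim\sqrt{2\epsilon}$, with $C$ fixed by the boundary condition and normalization. This gives an exact geometric profile and reduces both cases to the single computation $\sum_{x\ne y}|x-y|^{-s}(1-r)^2r^{x+y}\sim(1-r)^sE_s(e^{-u})$.
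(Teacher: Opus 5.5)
Your proposal is correct and, once you replace the local-CLT heuristic with the exact resolvent recurrence, it follows the paper's route: the same characteristic equation $\gamma r^2-2r+\gamma=0$ giving $1-r\sim\sqrt{2\epsilon}$, an exponential profile of width $L$ ($L\sim\epsilon^{-1}$ ballistic, $L\sim\epsilon^{-1/2}$ diffusive), and the scaling $\mathcal{J}_s\sim L^{-s}C_s$ with $C_s<\infty$ only for $s<1$. The one difference is that you keep the occupancies discrete, drop the singular diagonal, and justify the passage to the continuum integral by a Riemann-sum and near-diagonal-strip bound, whereas the paper simply replaces the geometric laws by exponential densities and evaluates the energy integral via the substitution $u=\lambda x$.
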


This result demonstrates that our objective function~\cref{eq:objective} can be interpreted as the interaction energy of a cloud of particles, and it correctly identifies and penalizes the backtracking moves of random walks. By minimizing this objective, the agent is naturally driven toward the ballistic regime, maximizing the spatial spread of its trajectories.
\end{example}

\section{Proof of Proposition \ref{prop:transport_scaling}} \label{app:energy-proof}

\transportProp*

\begin{proof}
We derive the asymptotic scaling of the state occupancy measure $d_\pi$ and the resulting Riesz energy $\mathcal{J}_s(\pi)$ for both policies as $\gamma \to 1$. Let $\epsilon = 1 - \gamma$ denote the inverse effective horizon.

\textbf{Case A: Ballistic Policy ($\pi_{bal}$).}
The agent moves deterministically: $x_{t} = t$. The discounted state occupancy is given by the geometric distribution:
\begin{equation}
    d_{bal}(x) = (1-\gamma) \sum_{t=0}^\infty \gamma^t \delta(x - t) = \epsilon (1-\epsilon)^x.
\end{equation}
For small $\epsilon$, this approximates\footnote{This result can be obtained taking the logarithm of $(1 - \epsilon)^x$ and the first order Taylor approximation in $\epsilon$.} an exponential distribution $d_{bal}(x) \approx \epsilon e^{-\epsilon x}$. The characteristic width (scale) is $L_{bal} \sim \epsilon^{-1}$.

\textbf{Case B: Diffusive Policy ($\pi_{diff}$).}
The agent follows a symmetric random walk with reflection at $x=0$. The occupancy $d_{diff}(x)$ satisfies the stationary Bellman flow equation for $x > 0$:
\begin{equation}
    d(x) = \gamma \left[ \frac{1}{2} d(x-1) + \frac{1}{2} d(x+1) \right].
\end{equation}
Rearranging terms yields $\gamma d(x+1) - 2d(x) + \gamma d(x-1) = 0$. We seek a solution $d(x) \propto \lambda^x$ with $\lambda < 1$. The characteristic equation $\gamma \lambda^2 - 2\lambda + \gamma = 0$ yields roots $\lambda_{\pm} = (1 \pm \sqrt{1 - \gamma^2})/\gamma$. Selecting the stable root $\lambda_-$ and using $\sqrt{1-\gamma^2} \approx \sqrt{2\epsilon}$:
\begin{equation}
    \lambda_- \approx 1 - \sqrt{2\epsilon} \approx e^{-\sqrt{2\epsilon}}.
\end{equation}
Thus, $d_{diff}(x) \approx \sqrt{2\epsilon} e^{-x\sqrt{2\epsilon}}$, with characteristic width $L_{diff} \sim \epsilon^{-1/2}$. Now notice how both occupancies take the form of an exponential distribution $p(x) = \lambda e^{-\lambda x}$ supported on $[0, \infty)$, where $\lambda_{bal} = \epsilon$ and $\lambda_{diff} = \sqrt{2\epsilon}$.
We compute the Riesz energy integral explicitly using the substitution $u = \lambda x, v = \lambda y$:

\begin{align}
    \mathcal{J}_s(\pi) &= \int_0^\infty \int_0^\infty p(x) p(y) |x-y|^{-s} \, dx \, dy \\
    &= \int_0^\infty \int_0^\infty \left(\lambda e^{-\lambda x}\right) \left(\lambda e^{-\lambda y}\right) |x-y|^{-s} \, dx \, dy \\
    &= \lambda^2 \int_0^\infty \int_0^\infty e^{-u} e^{-v} \left| \frac{u}{\lambda} - \frac{v}{\lambda} \right|^{-s} \frac{du}{\lambda} \frac{dv}{\lambda} \\
    &= \lambda^s \underbrace{\int_0^\infty \int_0^\infty e^{-(u+v)} |u-v|^{-s} \, du \, dv}_{C_s}.
\end{align}
The integral $C_s$ converges to a constant depending only on $s$ (for $s < 1$). Thus, the energy scales directly with the decay rate: $\mathcal{J}_s(\pi) \propto \lambda^s$. Substituting the decay rates $\lambda$ derived in Part 1:
\begin{itemize}
    \item \textbf{Ballistic:} $\lambda_{bal} \sim \epsilon \implies \mathcal{J}_s(\pi_{bal}) \propto \epsilon^s$.
    \item \textbf{Diffusive:} $\lambda_{diff} \sim \epsilon^{1/2} \implies \mathcal{J}_s(\pi_{diff}) \propto (\epsilon^{1/2})^s = \epsilon^{s/2}$.
\end{itemize}
Comparing the two regimes as $\epsilon \to 0$:
\begin{equation}
    \frac{\mathcal{J}_s(\pi_{bal})}{\mathcal{J}_s(\pi_{diff})} \propto \frac{\epsilon^s}{\epsilon^{s/2}} = \epsilon^{s/2} \to 0.
\end{equation}
\end{proof}

\section{Proof of Theorem \ref{thm:dual_gradient}}
\label{app:dual_gradient_proof}

We derive the dual form of the gradient for a general transition operator with the sandwich structure $\trop = \Phi^\top G \Psi$. Starting from the primal gradient~\cref{eq:primal_gradient}:
\begin{equation}
\nabla_{\polop} \mathcal{J}(\pi) = 2\gamma(1-\gamma)^2 \trop^* R_\pi^* \mu(\pi) \otimes R_\pi \mu_0,
\end{equation}
where $R_\pi = (I - \gamma \trop \polop_{\pol})^{-1}$ is the resolvent operator.

\textbf{Step 1: Resolvent representation.} The key observation is that the resolvent admits the dual representation
\begin{equation}
(I - \gamma \trop \polop_{\pol})^{-1} = \Phi^\top (\Id - \gamma G M_\pi)^{-1} G \Psi \polop_{\pol} + I_{\mathcal{H}},
\end{equation}
where terms orthogonal to the data span contribute identity. To see this, note that the Neumann series $(I - \gamma \trop \polop)^{-1} = \sum_{t=0}^\infty (\gamma \trop \polop)^t$ can be ``folded'' using the sandwich structure: each power $(\trop \polop)^t = \Phi^\top (G M_\pi)^{t-1} G \Psi \polop$ for $t \geq 1$, where $M_\pi = \Psi \polop_{\pol} \Phi^\top$ captures the policy-induced coupling between data points.

Applying this to $\mu_0 = \Phi^\top \alpha_0$ yields:
\begin{equation}
R_\pi \mu_0 = \Phi^\top (\Id - \gamma G M_\pi)^{-1} \alpha_0.
\end{equation}

\textbf{Step 2: Adjoint action.} The adjoint $\trop^* = \Psi^\top G \Phi$ maps the occupancy embedding $\mu(\pi) = (1-\gamma)\Phi^\top (\Id - \gamma G M_\pi)^{-1} \alpha_0$ to:
\begin{equation}
\trop^* R_\pi^* \mu(\pi) = (1-\gamma) \Psi^\top G (\Id - \gamma M_\pi^\top G)^{-1} K_\phi (\Id - \gamma G M_\pi)^{-1} \alpha_0.
\end{equation}

\textbf{Step 3: Dual gradient form.} Evaluating the gradient at $(x,a)$ and using $\langle \psi(x,a), \Psi^\top v \rangle = \sum_i v_i \langle \psi(x,a), \psi(x_i, a_i) \rangle$, we obtain the stated form with coefficient vector $c_\pi = 2\gamma(1-\gamma)^2 G (\Id - \gamma M_\pi^\top G)^{-1} K_\phi (\Id - \gamma G M_\pi)^{-1} \alpha_0$. \qed

\section{Properties of the Augmented Transition Operator}
\label{app:sink_prop}

\subsection{Preliminaries - Discounted Occupancy Measure}\label{app:preliminaries}
We define the discounted state occupancy measure $\nu(\pi)$ as follows:
\begin{equation}
   d^\pi(x) = (1-\gamma)\sum_{t=0}^{\infty}\gamma^t
\underset{\substack{s_0 \sim \mu_0(\cdot), \\ a_t \sim \pi(\cdot | s_t), \\ s_{t+1} \sim \mathcal{T}(\cdot | s_t, a_t)}}{\mathbb{E}} [ \mathbb{1}(s_t = sx]
\end{equation}
where $(1-\gamma)$ serves as the normalization constant, and the expectation is taken over trajectories induced by the policy $\pi$ and the transition dynamics $\mathcal{T}$ (as defined in \cref{eq:objective}).

By leveraging the operator formulation introduced in \cref{eq:objective}, we can express the probability of visiting state $s$ at time step $t$ in a compact form. Expanding the expectation over the trajectory distribution yields:
\begin{align}
    \underset{\substack{s_0 \sim \mu_0(\cdot), \\ a_t \sim \pi(\cdot | s_t), \\ s_{t+1} \sim \mathcal{T}(\cdot | s_t, a_t)}}{\mathbb{E}} [ \mathbb{1}(s_t = s)]  &= \sum_{s_0 \in \mathcal{S}} \mu_0(s_0) \sum_{a_0 \in \mathcal{A}} \pi(a_0|s_0)\sum_{s_1 \in \mathcal{S}} \mathcal{T}(s_1|s_0, a_0) \ldots\sum_{a_{t-1} \in \mathcal{A}} \pi(a_{t-1}|s_{t-1}) \mathcal{T}(s|s_{t-1}, a_{t-1}) \\
    &= [( \trop\polop_{\pol})^t\mu_0)](s)
\end{align}
Here, the recursive application of the policy operator $\polop_{\pol}$ and transition operator $\trop$ captures the sequential evolution of the state density. Consequently, the occupancy measure can be written as the resolvent of the transition operator applied to the initial distribution.

Thus, we can plug this result into the discounted occupancy measure above:
\begin{equation}
   d^\pi(x) = (1-\gamma)\sum_{t=0}^{\infty} [(\gamma\trop\polop_{\pol})^t\mu_0](x) = (1-\gamma)[(\Id - \gamma \trop\polop_{\pol})^{-1}\mu_0] (x)
\end{equation}
where the last equality follows from $\trop$ and $\polop$ being Markov operators \cite{aliprantis2006infinite} ( $\|\trop\| = \|\polop\| = 1$), making the Neumann series convergent.

Consequently, we assume that for a given  $\phi: \mathcal{X} \to \mathbb{R}^z$ and a dataset $\Phi \in \mathbb{R}^{n\times z}$, we can rewrite $\mu_0 = \Phi^* \alpha_0$, where $\alpha_0$ is coefficient array. Recalling \cref{thm:dual_gradient} we assume to work with $\trop = \Phi^* G \Psi$ for some matrix $G$. Then, we can rewrite the discounted state space function as
\begin{align}
   d^\pi(x) &= (1-\gamma)\sum_{t=0}^{\infty} [(\gamma\trop\polop_{\pol})^t\mu_0](x) \\
    &= (1-\gamma)\sum_{t=0}^{\infty} [(\gamma\Phi^* \hat G\Psi\polop_{\pol})^t \Phi^*\alpha_0](x) \\
    &= (1-\gamma)\sum_{t=0}^{\infty} [\Phi^*(\gamma \hat G\Psi\polop_{\pol} \Phi^*)^t\alpha_0](x) \label{eq:unrolling_and_rerolling}
\end{align}
where in \cref{eq:unrolling_and_rerolling}, we unravel the sum and the exponential to bring in the right $\Phi^*$ and bring out the left.
We then define $M = \Psi\polop_{\pol} \Phi^* $ and, consequently, $d^\pi(x) = (1-\gamma)\sum_{t=0}^{\infty} [\Phi^*(\gamma B\Psi\polop_{\pol} \Phi^*)^t\alpha_0](x) $.
\subsection{Feature augmentation}
Here we show that under the augmented kernel definition, the estimated transition operator $\hat{\trop}_n$ maps the sink state to itself. Let $e_{\sink}$ be the embedding of the sink state $s_{\sink}$.  Thus, we set $\hat{\Phi} \in \mathbb{R}^{n \times z_x + 1}, \hat{\Psi} \in \mathbb{R}^{n \times z_{xa} + 1}$ as the datasets containing the state and the state-action pair embeddings augmented with an additional dimension, which is set to $0$.
The sink state embedding is defined as
$ e_\sink = \left(
\begin{smallmatrix}
0 \\
\vdots \\
0 \\
1
\end{smallmatrix}
\right) \in \mathbb{R}^{z_x+1}.$
In the following analysis, we define $\mathbb{1}_k$ as a column vector containing $k$-times the $1$ value.
The augmented version of embedding datasets $\bar{\Phi}$ end $\bar{\Psi}$ are defined in the following manner:
\begin{equation}
\label{eq:phi_augmented}
\bar{\Phi}^* =
\begin{bmatrix}
\hat{\Phi}^* - e \mathbb{1}^*_{z_{xa}+1} \hat{\Psi}^* & e
\end{bmatrix}
\in \mathbb{R}^{z_x+1 \times n+1}
\end{equation}
\begin{equation}\label{eq:psi_augmented}
\bar{\Psi} =
\begin{bmatrix}
\hat{\Psi}  \\
\mathbb{1}^*_{z_{xa}+1}
\end{bmatrix}
\in \mathbb{R}^{n+1\times z_{xa}+1}
\end{equation}
The coefficient matrix $\hat{G} = (K + \lambda \Id)^{-1}$ from~\cref{prop:augmented_sandwich_main} extends to the augmented setting as:
\begin{equation}
\bar{G} =
\begin{bmatrix}
\hat{G} & 0 \\
0 & 1
\end{bmatrix}
\in \mathbb{R}^{n+1 \times n+1}.
\end{equation}
We assume that the initial distribution can be rewritten as $\bar{\nu}_0 = \bar{\Phi}^* \bar{\alpha}_0$, where $ \bar{\alpha}_0 = \left(
\begin{smallmatrix}
1 \\
0 \\
\vdots \\
0 \\
\end{smallmatrix}
\right) \in \mathbb{R}^{z_x+1}. $

We also assume the normalization condition: there exists $\mathbf{1}_{\mathcal H}\in\mathcal H$ such that
\begin{equation}\label{eq:normalization_assumption}
    \langle \mathbf{1}_{\mathcal H}, \psi(x,a) \rangle_{\mathcal H} = 1,
\qquad \forall (x,a)\in\mathcal X\times\mathcal A.
\end{equation}

\begin{proposition}[Structure of the Augmented Operator $\bar \trop_n$]\label{prop:augmented_sandwich}
Under the normalization condition~\cref{eq:normalization_assumption}, the augmented transition operator $\bar{\trop}$ defined in~\cref{eq:augmented_cme_opt} admits the sandwich form
\begin{equation}
\bar{\trop} = \bar{\Phi}^\top \bar{G} \bar{\Psi},
\end{equation}
where $\bar{\Phi}$ and $\bar{\Psi}$ are the augmented feature matrices defined in~\cref{eq:phi_augmented,eq:psi_augmented}, and $\bar{G} = \mathrm{diag}(\hat{G}, 1)$ with $\hat{G} = (K + \lambda \Id)^{-1}$. Moreover, the policy-induced matrix inherits a block-diagonal structure:
\begin{equation}
\bar{M}_\pi = \mathrm{diag}(M_\pi, 1),
\end{equation}
where $M_\pi$ is the policy-induced matrix from the original (non-augmented) problem. Consequently, the dual gradient formula of \cref{thm:dual_gradient} applies directly with the $(n+1)$-dimensional augmented matrices.
\end{proposition}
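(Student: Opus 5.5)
The plan is to verify the sandwich identity by block multiplication. Then we compute the augmented policy-induced matrix directly from its definition in \cref{thm:dual_gradient}. Throughout, the augmented factors are the block matrices of \cref{prop:augmented_sandwich_main}:
\[
\widetilde\Phi_n^\top=\begin{bmatrix}\bar\Phi_n^\top-\bar\phi_{\sink}\mathbf 1_n^\top & \bar\phi_{\sink}\end{bmatrix},\qquad \widetilde\Psi_n=\begin{bmatrix}\bar\Psi_n\\ \mathbf 1^\top\end{bmatrix},\qquad \widetilde G=\mathrm{diag}\bigl((K+\lambda\Id)^{-1},1\bigr).
\]
Here $\mathbf 1$ is the normalizing functional of \cref{eq:normalization_assumption}.

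\textbf{Step 1: the sandwich form.} Multiplying out the blocks gives
\[
\widetilde\Phi_n^\top\widetilde G\widetilde\Psi_n=(\bar\Phi_n^\top-\bar\phi_{\sink}\mathbf 1_n^\top)(K+\lambda\Id)^{-1}\bar\Psi_n+\bar\phi_{\sink}\mathbf 1^\top .
\]
The last term is exactly $\trop_0=\bar\phi_{\sink}\otimes\mathbf 1$, so the product coincides with \cref{eq:augmented_cme_opt}. The Gram matrix $K=\bar\Psi_n\bar\Psi_n^\top$ is unchanged, because the augmented state-action rows carry a zero in the extra coordinate. Hence $\widetilde G$ is symmetric and \cref{thm:dual_gradient} is applicable.

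\textbf{Step 2: the policy-induced matrix.} I read $\widetilde M_\pi=\widetilde\Psi_n\bar\polop_\pi\widetilde\Phi_n^\top$, as in the proof of \cref{thm:dual_gradient}, and compute its four blocks. The extended policy operator $\bar\polop_\pi$ acts as $\polop_\pi$ on $\mathcal H$ and sends the sink to itself.
\begin{itemize}
\item The top-left block is $\bar\Psi_n\bar\polop_\pi(\bar\Phi_n^\top-\bar\phi_{\sink}\mathbf 1_n^\top)$. The sink direction is annihilated by $\bar\Psi_n$, since data state-action features have zero sink coordinate. So this block reduces to $\Psi\polop_\pi\Phi^\top=M_\pi$.
\item The top-right block $\bar\Psi_n\bar\polop_\pi\bar\phi_{\sink}$ vanishes for the same reason.
\item The bottom-right block equals $\langle\mathbf 1,\bar\polop_\pi\bar\phi_{\sink}\rangle$. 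It equals $1$ once the sink's state-action feature is normalized as in \cref{eq:normalization_assumption}.
\item The bottom-left row is $\mathbf 1^\top\bar\polop_\pi(\bar\Phi_n^\top-\bar\phi_{\sink}\mathbf 1_n^\top)$. Its $i$-th entry is $\sum_a\pi(a\mid x_i')\langle\mathbf 1,\psi(x_i',a)\rangle-1=1-1=0$, again by normalization.
\end{itemize}
This is the one place where the shift $-\bar\phi_{\sink}\mathbf 1_n^\top$ in $\widetilde\Phi_n$ is essential: it cancels the unit mass that $\mathbf 1$ assigns to each non-sink column.

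\textbf{Step 3 and main obstacle.} With the sandwich structure, symmetric $\widetilde G$, and $\widetilde M_\pi=\mathrm{diag}(M_\pi,1)$ in hand, the Neumann-series folding argument of \cref{thm:dual_gradient} goes through verbatim in dimension $n+1$. The same holds for the representation $\bar\mu_0=\widetilde\Phi_n^\top\bar\alpha_0$, so the dual gradient formula applies.

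The hard part is Step 2. The paper is loose about how $\bar\polop_\pi$ acts on the sink and on the shifted columns, and about whether the sink feature satisfies the normalization. I would first fix these conventions explicitly: take the sink to be absorbing under every action, and require $\langle\mathbf 1,\bar\psi(s_{\sink},a)\rangle=1$. I would then check that the off-diagonal blocks vanish. If the exact cancellation in the bottom-left block fails under the paper's conventions, I would instead state the result as block lower-triangular. That weaker form still suffices, since the inverses $(\Id-\gamma\widetilde G\widetilde M_\pi)^{-1}$ keep the same block structure.
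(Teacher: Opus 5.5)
Your proposal is correct and follows essentially the paper's route. You multiply out the blocks to get the sandwich form, where you usefully make explicit that the extra term $\bar\phi_{\sink}\mathbf 1^\top$ is exactly $\trop_0$, which the paper leaves as ``by construction''. You then compute $\widetilde\Psi_n\bar\polop_\pi\widetilde\Phi_n^\top$ block by block, using orthogonality of the sink coordinate for the top blocks and the normalization $\langle\mathbf 1,\psi\rangle=1$ for the $1-1=0$ cancellation in the bottom-left row, as the paper does.

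The paper's separate check that the sink is absorbing is not needed for the statement and is already implicit in your bottom-right computation. Since your cancellation holds, the block lower-triangular fallback is unnecessary.
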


\begin{proof}
The proof proceeds in three parts: (i) verifying the sandwich form $\bar{\trop} = \bar{\Phi}^\top \bar{G} \bar{\Psi}$, (ii) showing the sink state is absorbing, and (iii) establishing the block-diagonal structure of $\bar{M}_\pi$.

\paragraph{Part 1: Sandwich form.}
By construction, $\bar{\trop} = \bar{\Phi}^\top \bar{G} \bar{\Psi}$ with $\bar{G} = \mathrm{diag}(\hat{G}, 1)$ follows directly from the definitions~\cref{eq:phi_augmented,eq:psi_augmented}.

\paragraph{Part 2: The sink state is absorbing.}
We verify that the transition operator maps the sink state to itself. Before passing through $\bar{\trop}$, the sink state passes through $\polop_{\pol}$ yielding $e_{xa} \in \mathbb{R}^{z_{xa} +1}$.
\begin{align}
\bar{\trop} e_{xa} &= \bar{\Phi}^* \bar{G} \bar{\Psi} e_{xa} \\
 &= \bar{\Phi}^* \bar{G} \begin{bmatrix}
\hat{\Psi}\\
\mathbb{1}^*_{z_{xa}+1}
\end{bmatrix}
e_{xa} \\
 &= \bar{\Phi}^* \bar{G} \begin{bmatrix}
\cancel{\hat{\Psi} e_{xa}}  \\
\mathbb{1}^*_{z_{xa}+1}  e
\end{bmatrix} \label{eq:t-cancel-out} \\
 &= \bar{\Phi}^* \begin{bmatrix}
\hat{G} & 0 \\
0 & 1
\end{bmatrix} \begin{bmatrix}
0 \\
1
\end{bmatrix} \\
 &= \bar{\Phi}^* \begin{bmatrix}
0 \\
1
\end{bmatrix} \\
 &= \begin{bmatrix}
\hat{\Phi}^* - e \mathbb{1}^*_{z_{xa}+1}  \hat{\Psi}^* & e
\end{bmatrix} \begin{bmatrix}
0 \\
1
\end{bmatrix} \\
 &= e
\end{align}
This shows that the sink state is absorbing: $\bar{\trop} e_{xa} = e$.
In \cref{eq:t-cancel-out}, we used that $\hat{\Psi} e_{xa} = 0$ by orthogonality and $\mathbb{1}^*_{z_{xa}+1} e = 1$.

For completeness, we also verify the behavior on regular states:
\begin{align}
\bar{\trop} \hat{\psi}(x) &= \bar{\Phi}^* \bar{G} \bar{\Psi} \hat{\psi}(x) \\
&= \bar{\Phi}^* \bar{G} \begin{bmatrix}
\hat{\Psi}  \\
\mathbb{1}^*_{z_{xa}+1}
\end{bmatrix} \hat{\psi}(x) \\
&= \bar{\Phi}^*
\begin{bmatrix}
\hat{G} & 0 \\
0 & 1
\end{bmatrix}
\begin{bmatrix}
\hat{\Psi}\hat{\psi}(x,a) \\
\mathbb{1}^*_{z_{xa}+1}  \hat{\psi}(x,a)
\end{bmatrix} \\
&=
\begin{bmatrix}
\hat{\Phi}^* - e \mathbb{1}^*_{z_{xa}+1}  \hat{\Psi}^* & e
\end{bmatrix}
\begin{bmatrix}
\hat{G} \hat{\Psi}\hat{\psi}(x,a) \\
\mathbb{1}^*_{z_{xa}+1}  \hat{\psi}(x,a)
\end{bmatrix} \\
&=
\begin{bmatrix}
\hat\Phi^*\hat G \hat\Psi\hat\psi(x,a) - e \mathbb{1}^*_{z_{xa}+1}  \hat\Psi^*\hat G \hat\Psi\hat\psi(x,a) + e\mathbb{1}^*_{z_{xa}+1} \hat\psi(x,a)
\end{bmatrix} = (1), \, (2)\label{eq:constraints-on-1-and-cancellation} \\
(1) &\approx
\begin{bmatrix}
\hat\Phi^*\hat G \hat\Psi\hat\psi(x,a) + e\mathbb{1}^*_{z_{xa}+1} \hat\psi(x,a)
\end{bmatrix}, \, \text{if} \,\,\, \hat\Psi\hat\psi(x,a) \approx 0 \\
(2) &\approx
\begin{bmatrix}
\hat\Phi^*\hat G \hat\Psi\hat\psi(x,a)
\end{bmatrix}  \label{eq:second_derivation}
\end{align}
In \cref{eq:constraints-on-1-and-cancellation}, $\hat G = (\hat\Psi \hat\Psi^T + \lambda \Id)^{-1} \approx (\hat\Psi \hat\Psi^T)^{-1}$ for a small $\lambda$. Then, $e \mathbb{1}^*_{z_{xa}+1}  \hat\Psi^*\hat G \hat\Psi\hat\psi(x,a) \approx e \mathbb{1}^*_{z_{xa}+1}\hat\psi(x,a)$ and it leads to \cref{eq:second_derivation}.

\paragraph{Part 3: Block-diagonal structure of $\bar{M}_\pi$.}
Recall from \cref{app:preliminaries} that $M = \Psi\polop \Phi^*$. We show that $\bar{M} = \bar{\Psi} \bar{\polop} \bar{\Phi}^*$ has block-diagonal structure, where $\bar{\polop} =
\begin{bmatrix}
\polop \\
0
\end{bmatrix}$. We obtain the following structure for $\bar{M}$:
\begin{align}
\bar{M} &= \bar{\Psi} \bar{\polop} \bar{\Phi}^* \\ 
&= \bar{\Psi}
\begin{bmatrix}
\polop  \\
0
\end{bmatrix}
\begin{bmatrix}
\Phi^* - e \mathbb{1}^*_{z_{xa}+1} \Psi^* & e
\end{bmatrix} \\ 
&= \bar{\Psi} \begin{bmatrix}
\polop\Phi^* - \polop e \mathbb{1}^*_{z_{xa}+1} \Psi^* & \polop e
\end{bmatrix} \\ 
&= \begin{bmatrix}
\Psi \\
\mathbb{1}^*_{z_{xa}+1}
\end{bmatrix} \begin{bmatrix}
\polop\Phi^* - \polop e \mathbb{1}^*_{z_{xa}+1} \Psi^* & \polop e
\end{bmatrix} \\ 
&= \begin{bmatrix}
\Psi \polop\Phi^* - \Psi \polop e \mathbb{1}^*_{z_{xa}+1} \Psi^* & \Psi \polop e  \\
\mathbb{1}^*_{z_{xa}+1} (\polop\Phi^* - \polop e \mathbb{1}^*_{z_{xa}+1} \Psi^*) & \mathbb{1}^*_{z_{xa}+1}  \polop e
\end{bmatrix} \\ 
\end{align}
We analyze the $M$ matrix point-wise, starting from the upper-left block.
\begin{align}
\bar{M}_{ij} &= \psi(x_i,a_i)\big(\polop \phi^*(x'_j) - \polop e \mathbb{1}^*_{z_{xa}+1} \psi^*(x_j,a_j)\big)
\end{align}
In the following, we proceed with the computations with the first part of the multiplication, which is $\psi(x_i,a_i)\polop \phi(x'_j) \rangle$
\begin{align}
\langle\psi(x_i,a_i)^*,\polop \phi(x'_j) \rangle &= \langle \phi (x_i) \otimes e_{a_i}, \sum_{a \in \mathcal{A}}\pi(a | x'_j) \phi(x'_j) \otimes e_a \rangle \\
&=  \cancel{\sum_{a \in \mathcal{A}}} \pi(a | x'_j) \langle \phi (x_i) , \phi(x'_j)  \rangle \otimes e_{a = a_i } \label{eq:cancel-sum}\\
&= \pi(a | x'_j) K(x_i, x'_j)
\end{align}

Here, we proceed with the second element of the above multiplication $\psi(x_i,a_i) \polop e \mathbb{1}^*_{z_{xa}+1} \psi(x_j,a_j)$:
\begin{align}
\psi(x_i,a_i) \polop e \mathbb{1}^*_{z_{xa}+1} \psi(x_j,a_j) &= \pi(a_i|e) K(x_i, e)\mathbb{1}^*_{z_{xa}+1} \psi (x_j,a_j) = 0 \label{eq:psi-p-e}\\
\bar{M}_{ij} = \pi(a | x'_j) K(x_i, x'_j) &= M_{ij}
\end{align}
In \cref{eq:cancel-sum}, the sum is zero for every $a \ne a_i$, due to the presence of $e_{a = a_i}$, which is basically a Dirac delta.
The upper-right block yelds to  $\pi(a_i|e) K(x_i, e) = 0$ as in \cref{eq:psi-p-e}, which is $0$ due to $K(x_i, e)$.
For the bottom-left block, the $(n+1, j)$ entry of $\bar{M}$ corresponds to the inner product between the $(n+1)$-th row of $\bar{\Psi}$ (which is $\mathbf{1}^\top$) and the $j$-th column of $\bar{\polop}\bar{\Phi}^*$. For a regular state $x'_j \in \mathcal{X}$:
\begin{align}
(\bar{M})_{n+1,j} &= \langle \mathbf{1}, \polop_{\pol}\bar{\phi}(x'_j)\rangle - \langle \mathbf{1}, \polop_{\pol}\bar{\phi}(e)\rangle \cdot \langle \mathbf{1}, \psi(x_j, a_j)\rangle \notag\\
&= \sum_{a \in \mathcal{A}} \pi(a|x'_j) \underbrace{\langle \mathbf{1}, \psi(x'_j, a)\rangle}_{=1} - \sum_{a \in \mathcal{A}} \pi(a|e) \underbrace{\langle \mathbf{1}, \bar{\psi}(e, a)\rangle}_{=1} \cdot \underbrace{\langle \mathbf{1}, \psi(x_j, a_j)\rangle}_{=1} \notag\\
&= 1 - 1 = 0, \notag
\end{align}
where we used the normalization assumption $\langle \mathbf{1}_{\mathcal{H}}, \psi(x,a)\rangle = 1$ for all $(x,a)$.


This establishes $\bar{M}_\pi = \mathrm{diag}(M_\pi, 1)$, completing the proof.
\end{proof}

\newpage

\section{Algorithm}
\subsection{Simplified}

\begin{algorithm}[h!]
\caption{\algo{} (Simplified)}
\label{alg:rover_simple}
\begin{algorithmic}[1]
\REQUIRE Discount $\gamma$, step size $\eta$, regularization $\lambda$
\STATE Initialize policy $p=p_0$, replay buffer $\mathcal{D} = \emptyset$
\FOR{each episode}
    \STATE Collect transitions using $p$, add to $\mathcal{D}$
    \STATE \texttt{// 1. Representation Learning}
    \STATE Update encoder $\phi_\theta$ via contrastive loss~\cref{eq:l_nce} on $\mathcal{D}$
    \STATE \texttt{// 2. World Model Estimation}
    \STATE Compute $\bar \trop_n$ via (sink-biased) regression~\cref{eq:augmented_cme_opt} on $\mathcal{D}$
    \STATE \texttt{// 3. Policy Mirror Descent}
    \STATE Initialize $\pi_0 = 1/|\spA|$, $C = 0$
    \FOR{$t = 1, \ldots, T_{\text{PMD}}$}
        \STATE Compute gradient coefficients $c_{\pi_t}$~\cref{thm:dual_gradient}
        \STATE $C \gets C + c_{\pi_t}$
        \STATE $\pi_{t+1}(\cdot|x) \gets \text{softmax}\bigl(-\eta \langle \psi(x,\cdot), \Psi_n^\top C \rangle\bigr)$
    \ENDFOR
    \STATE Update $p \gets \pi_{T_{\text{PMD}}}$
\ENDFOR
\end{algorithmic}
\end{algorithm}

\subsection{Full Algorithm}
\begin{algorithm}[]
\caption{\algo{} (Full Details)}
\label{alg:dist_matching}
\begin{algorithmic}[1]

\REQUIRE
discount factor $\gamma\in(0,1)$, step size $\eta>0$, kernel function $k(x, x') = \langle\phi_\theta(x),\phi_\theta(x')\rangle$ with learned feature map $\phi_\theta:\spX \to \mathbb{R}^z$, initial weights $C_0 = 0\in\mathbb{R}^{m \times |\spA|}$, Linear Layer W. \\
$\texttt{T}_{\text{steps}}, \texttt{PMD}$: training steps, PMD steps

\FOR{$t = 1$ to $\texttt{T}_{\text{steps}}$}
    \STATE \textbf{compute} $H_t \in \mathbb{R}^{1 \times n}$ such that $H_{j} = k(x_t,x_j)$
    \STATE $\boldsymbol{a}_t \gets \text{argmax}$ \textsc{softmax}$(-\eta H_t C_t) \in \mathbb{R}^{1\times |\spA|}$
    \STATE $\boldsymbol{x}_{t+1} \sim P(\cdot \mid \boldsymbol{x}_t, \boldsymbol{a}_t)$
    \STATE $\mathcal{D} \gets \mathcal{D} \cup (\boldsymbol{x}_t,\boldsymbol{a}_t,\boldsymbol{x}_{t+1})$
    \STATE $\phi_\theta \gets$ \textsc{UpdateEmbedders}$(\mathcal{D}, \phi_\theta)$
    \STATE $C_t \gets$ \textsc{UpdateActor}$(\mathcal{D} , \phi)$
\ENDFOR

\STATE
\FUNCTION{\textsc{UpdateEmbedders}$(\mathcal{D})$}{}
    \STATE Sample minibatch $\{(x_i, a_i, x'_i)\}_{i=1}^B \sim \mathcal{D}$
    \STATE Compute latent states $z_i = \phi_\theta(x_i)$
    \STATE Compute target latents $z'_i = (\phi_\theta(x'_i))|_{\text{stop\_grad}}$
    \STATE Predict next latent $\hat{z}'_i = W(z_i \otimes e_{a_i})$
    \STATE $\mathcal{L}_{\text{NCE}} = -\frac{1}{B} \sum_{i=1}^B \log \frac{\exp(\langle\bar{\hat{z}}'_i,\bar{z}'_i\rangle)}{\sum_{j=1}^B \exp(\langle\bar{\hat{z}}'_i,\bar{z}'_j\rangle)}$ \COMMENT{Contrastive (normalized) \cref{eq:l_nce}}
    \STATE Update embedder $\phi_\theta$ and $W$ to minimize $\mathcal{L}_{\text{NCE}}$
    \STATE \textbf{Return} $\phi_\theta$
\ENDFUNCTION

\STATE
\FUNCTION{\textsc{UpdateActor}$(\mathcal{D}, \phi)$}{}
    \STATE \texttt{/* Policy Mirror Descent */}
    \STATE Sample minibatch $\{(x_i, a_i, x'_i)\}_{i=1}^m \sim \mathcal{D}$
    \STATE \textbf{let} $k(x,z) = \phi(x)^\top \phi(z)$
    \STATE \textbf{let} $E\in\mathbb{R}^{m \times |\spA|}$ with rows $E_i = \text{\textsc{OneHot}}_{|\spA|}(a_i)$
    \STATE \textbf{let} $K \in \mathbb{R}^{m\times m}$ such that $K_{ij} = k(x'_i,x'_j)$
    \STATE \textbf{let} $H \in \mathbb{R}^{m\times m}$ such that $H_{ij} = k(x_i,x'_j)$
    \STATE \textbf{let} $\hat{\Phi} \in \mathbb{R}^{m \times z_x+1}$ such that $\hat{\Phi} = \{{\phi}(x_i)\}_{i=1}^m$
    \STATE \textbf{let} $\hat{\Psi} \in \mathbb{R}^{m \times z_{xa}+1}$ such that $\hat{\Psi} = \{{\phi}(x_i)\otimes e_a\}_{i=1}^m$
    \STATE \textbf{compute} $\tilde{\Phi}$
    \STATE \textbf{compute} $G = (\hat{\Psi} \hat{\Psi}^* + \Id)^{-1}$
    \STATE \textbf{compute} $\alpha_0 \in\mathbb{R}^m$ such that $\mu_0 = \hat{\Phi}^* \alpha_0$
    \STATE $\bar{G} \gets \begin{pmatrix} G & 0 \\ 0 & 1 \end{pmatrix}, \bar{\alpha}_0 \gets (\alpha_0^*, 1)^*$
    \STATE \texttt{/* Policy Mirror Descent Updates */}
    \STATE $C_0 = 0$
    \FOR{$t=0,1,\ldots,\texttt{PMD} - 1$}
        \STATE $\pi_{t+1} \gets$ \textsc{softmax}$\Big(-\eta H^* C_t\Big) \in \mathbb{R}^{m\times|\spA|}$ \COMMENT{Closed Form Policy Update}
        \STATE $M_{\pi_{t + 1}} = H \odot (E\pi_{t + 1}^{*}) \in\mathbb{R}^{m \times m}$

        \STATE $\bar{M}_{\pi_{t + 1}} \gets \begin{pmatrix} M_{\pi_{t+1}} & 0 \\ 0 & 1 \end{pmatrix}$ 
        
        \STATE $U_{\pi_{t+1}} \gets (I - \gamma \bar G\bar{M}_{\pi_{t + 1}})^{-*}\bar{\Phi}$
        \STATE $c \gets 2\gamma(1-\gamma)^2\,\bar{G}^{-*} U_{\pi_{t+1}} U_{\pi_{t+1}}^* \bar{\alpha}_0$ \COMMENT{Closed Form Gradient Computation for the MMD loss}
        \STATE $C_{t+1} \gets C_t + {\rm diag}(c) E$
    \ENDFOR
    \STATE \textbf{Return} $C_{\texttt{PMD}}$
\ENDFUNCTION
\end{algorithmic}
\end{algorithm}

\newpage
\section{More on the Experiments}
\subsection{Environment Details} \label{app:envs-info}
We evaluate \algo{} on sparse-reward navigation environments designed to isolate coverage and bottleneck-exploration behavior. 
We designed different goal navigation configurations to experiment in a controllable settings. 
All the following environments can give discrete observations (i.e., one-hot encoding) or pixel-based observations, which consist of an RGB image $84 \times 84$. The location information is \textbf{hidden} both types of observation. The action space is discrete with four actions: up, down, left and right. The reward is $-1$ per step, to encourage the agent to find the shortest path, and the episode end when the agent reaches the target or it reaches $H$ steps.

\paragraph{Middle Room}
The main diagnostic domain is \textit{Middle Room}, where the agent starts from a central room connected through narrow single-cell corridors to multiple surrounding rooms. 
This layout makes it possible to distinguish transient discovery from stable occupancy coverage, since a policy must repeatedly traverse bottlenecks to induce broad aggregate behavior. Here, the states are 129, and the horizon is $H=15$.

\begin{figure}[h!]
    \centering
    \includegraphics[width=0.5\linewidth]{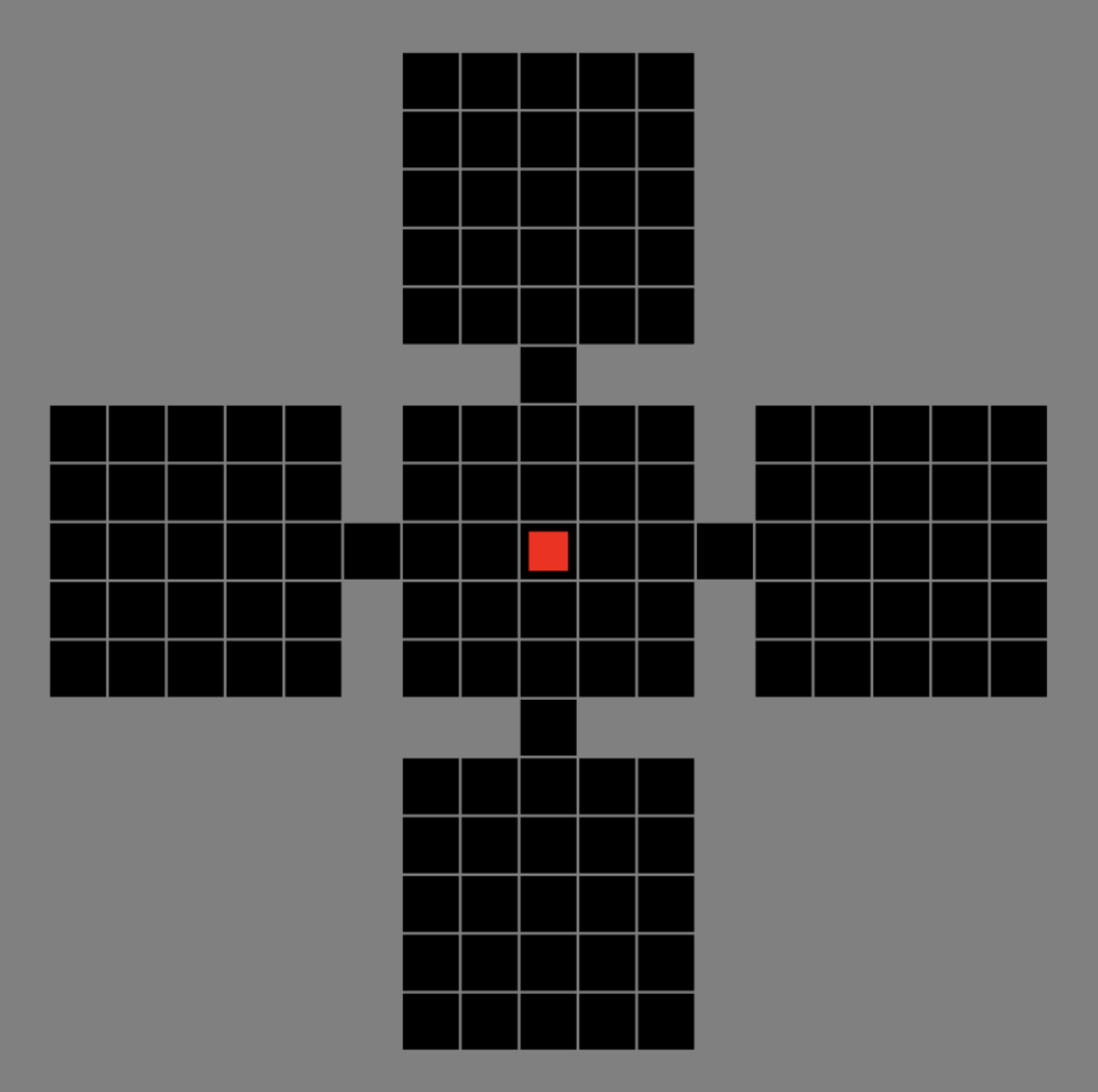}
    \caption{Snapshot of the \textit{Middle Room} environment}
    \label{fig:middle-room}
\end{figure}

\paragraph{Maze} We experimented also in a \textit{Maze} setting. In this case, the $\spX = |108|$ and the horizon $H=128$. 
\begin{figure}[h!]
    \centering
\includegraphics[width=1\linewidth]{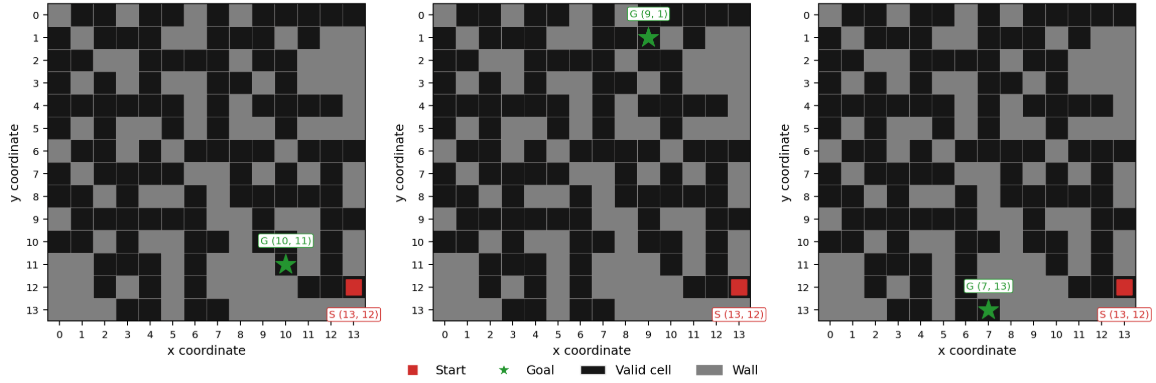}
    \caption{Snapshots of the Sparse Reward Maze environment.}
    \label{fig:maze_examples}
\end{figure}

\paragraph{Two Rooms and Multi Rooms} In the appendix, we extend our evaluation to other configurations: \textit{Two-Rooms} (\cref{fig:env_examples}, left) and \textit{Multi-Rooms }(\cref{fig:env_examples}, right) with goals placed at different distances from the start state. The state space and horizon are  respectively $|\spX|=99\,, H = 80$ and $|\spX|=109\,, H = 300$. \textit{Two-Rooms}  serve as a simple environment to see that already in the simplest bottlenecked configuration, we see a lack of performance of the baselines. The \textit{Multi-Rooms } is interesting in that all the baselines explore all the rooms effectively, but the learnt behaviour is not smart enough to probabilistically see all of them, while this is the case of \algo.  
\begin{figure}[h!]
    \centering
\includegraphics[width=1\linewidth]{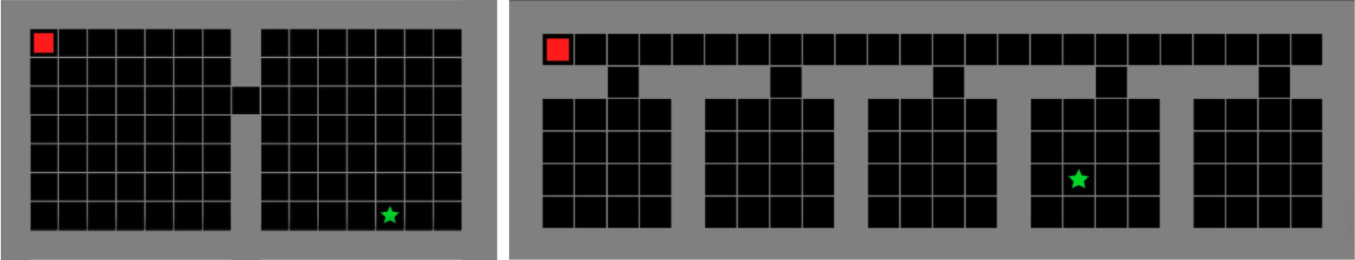}
    \caption{Snapshots of the Sparse Reward Navigation environments. The agent is depicted as the red square, while the goal is the green star. On the left, \textit{Two Rooms}. On the right, \textit{Multi-Room with a Single Corridor}}
    \label{fig:env_examples}
\end{figure}


\subsection{More on the Behaviour induced by Reward-free Objectives}
We analyze the behavior induced by different reward-free pretraining objectives in the \textit{Middle Room} environment as described already in \cref{sec:induced-behaviour}. 
 We compare \algo{} against Random Network Distillation (RND)~\citep{burda2018rnd}, State Marginal Matching (SMM)~\citep{lee2019smm}, Unsupervised Active Pre-Training (APT)~\citep{liu2021apt}, MAXENT~\citep{hazan2019maxent}, and Contrastive Intrinsic Control (CIC)~\citep{laskin2022CIC}.

We visualize each method by sampling $50$ trajectories from policy snapshots taken at initialization, at two intermediate checkpoints, and at the end of pretraining. 
For skill-based algorithms, each trajectory is computed by sampling a latent skill vector $z$ from a random distribution, following the codebase provided by \cite{laskin2022CIC, laskin2021urlb}.

\begin{figure}
    \centering
    \includegraphics[width=0.8\linewidth]{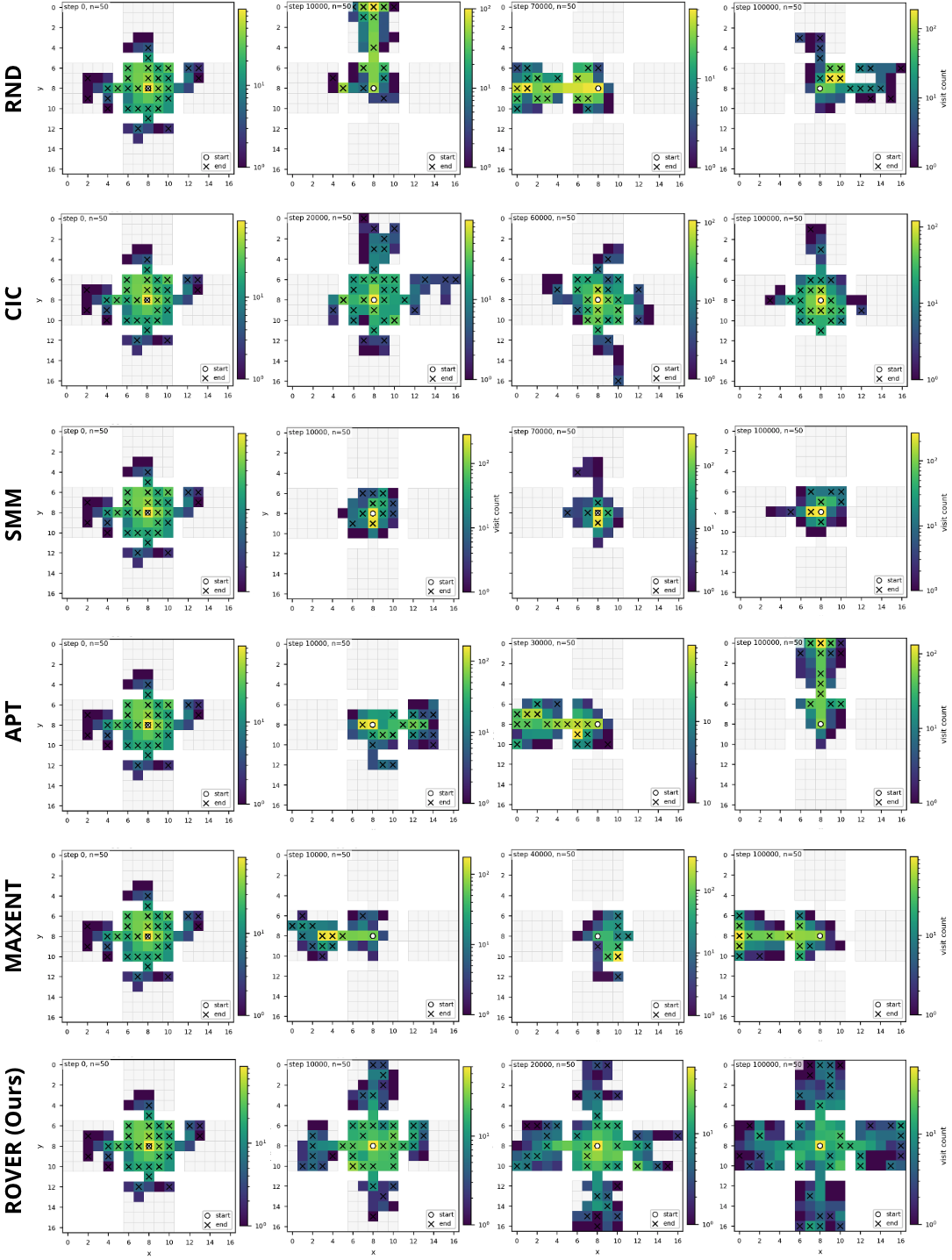}
    \caption{\textbf{Middle Room exploration during reward-free pretraining.} We sample 50 trajectories from policy snapshots at initialization, two intermediate checkpoints, and the end of pretraining for \algo{} and reward-free baselines. While several methods discover diverse states over training, their individual policies often collapse to localized occupancy; in contrast, effective transfer requires a final policy that broadly covers the state space.}
    \label{fig:induced_behaviour}
\end{figure}

\subsection{Test on a Simple Configuration: Two-Rooms}
\label{sec:tabular-exps}
We evaluate performance in the tabular case with discrete state and action spaces in the \textit{Two-Rooms} environment. 

We compare our pre-training method against several unsupervised baselines: Random Network Distillation (RND) \cite{burda2018rnd}, State Marginal Matching (SMM) \cite{lee2019smm}, Unsupervised Active Pre-Training (APT) \cite{liu2021apt}, and Contrastive Intrinsic Control (CIC) \cite{laskin2022CIC}. 
For the supervised fine-tuning phase, we employ the Deep Deterministic Policy Gradient (DDPG) algorithm \cite{lillicrap2015ddpg} as done in \cite{laskin2021urlb}, adapted here to the discrete action space. 
We also include a ``from scratch'' baseline where DDPG is trained without any pre-training.

In \cref{fig:tabular_exps}, our method significantly outperforms all the baselines and the scratch initialization, achieving superior sample efficiency and final performance within the limited interaction budget. 
With this experiment, we highlight the failure of the baseline transfer behaviour even in the simplest setting.
As evidenced in \cref{fig:tabular_exps}, during the exploration phase, our policy generates a rich learning signal by frequently visiting the goal and populating the buffer with diverse transitions.

In the end, this pre-training overhead is amortized across all potential downstream tasks, and a further analysis is provided in App.~\ref{app:overhead}.
\begin{figure}[h!]
    \centering
    \includegraphics[width=0.8\linewidth]{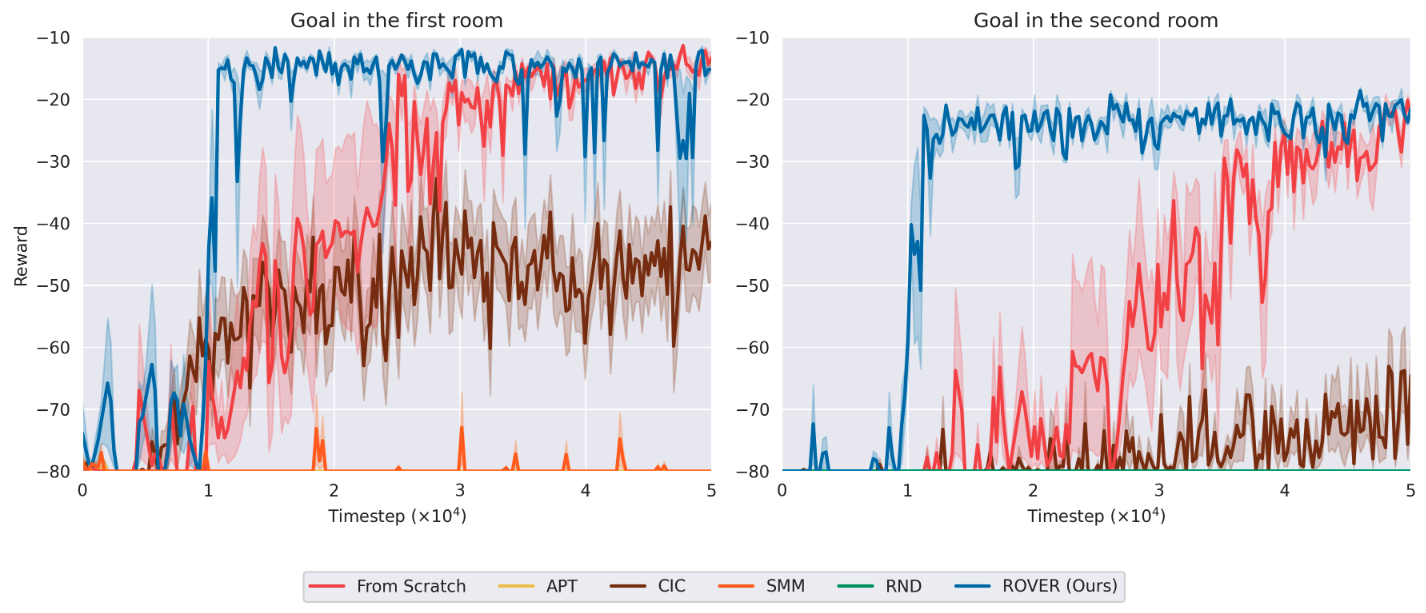}
    \caption{In the plots, the learning curve of DDPG using different policy initialization. The environment is \textit{Two-Rooms} of dimensions $7\times7$ connected by a corridor, which is a single state. In the left plot, the goal is set in the first room (i.e., simpler to discover), while on the right, the goal is in the second room with respect to the initial position. In the plot, we show the mean and standard error among five random seeds.}
    \label{fig:tabular_exps}
\end{figure}

\subsection{Initialization on Different Model-free Algorithms}\label{app:model-free_init}
We subsequently extend our evaluation to the \textit{Multi-Rooms} and \textit{Two-Rooms} environment in an image-based setting, where the agent receives $84 \times 84$ RGB images as input. In this context, we apply our initialization strategy on top of DDPG and Soft Actor-Critic (SAC) \cite{haarnoja2018sac}. In this high-dimensional setting, our initialization strategy consistently enhances sample efficiency compared to standard initializations, effectively accelerating the learning of the downstream policy. 
In \cref{fig:sac_ddpg}, we show the results for the \textit{Two Rooms} environment with the goal located in the first room (left) and second room (right). The bottom row displays results for the multi-room environment, with the goal positioned in the fourth room (left) and fifth room (right).
We observe that \algo\ initialization is compatible with both downstream algorithms, accelerating convergence in both cases. 
Notably, unlike in the tabular setting, DDPG fails to make progress in the \textit{Two Rooms} environment (specifically when the goal is in the second room) given the limited interaction budget. 
We attribute this failure to the difficulty of learning the encoder from images from scratch. 
In this context, \algo\ significantly improves sample efficiency by transferring a pre-trained encoder together with the initial policy.

\begin{figure}[h!]
    \centering

    \includegraphics[width=1\linewidth]{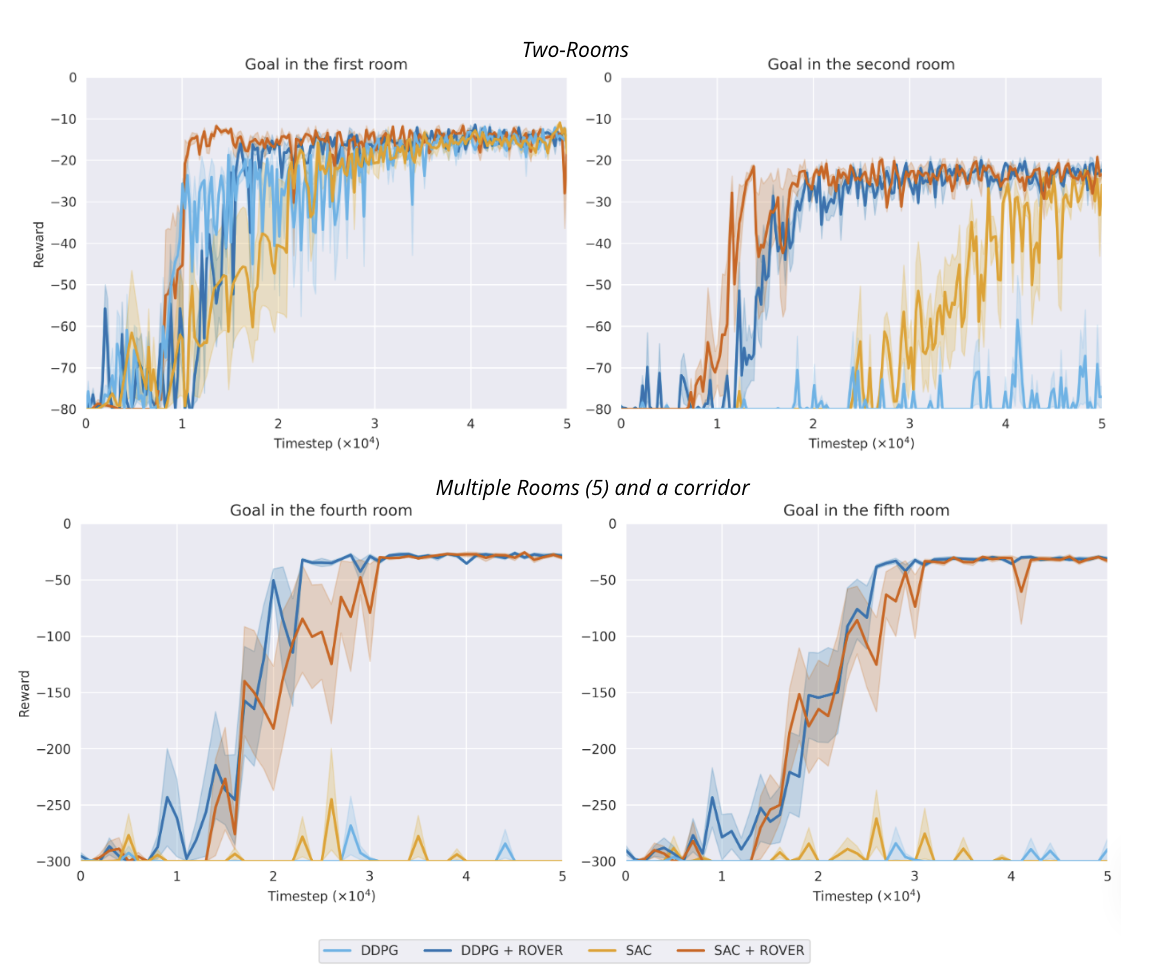}
    \caption{Comparison of learning curves for DDPG and SAC initialized with \algo. \textbf{Top row:} \textit{Two Rooms} environment with the goal located in the first room (left panel) and second room (right). \textbf{Bottom row:} five room 5 connected by a corridor environment with the goal in the fourth room (left) and fifth room (right).  We show the mean and standard error among five seeds.}
    \label{fig:sac_ddpg}
        \vspace{-.4truecm}
\end{figure}

\subsection{How ROVER Affects Sample Efficiency When Learning to Reach Different Goal Locations}

We conduct an ablation study to evaluate the sensitivity of convergence time to the distance between the starting position and the goal. To isolate the effects of policy initialization, we utilize a tabular setting with one-hot state encodings; this ensures that any performance gains are attributable solely to the transferred policy rather than feature learning.
As illustrated in \cref{fig:long-training}, the performance of standard DDPG degrades significantly as the goal distance increases, highlighting the difficulty of exploration in larger state spaces. 
In contrast, agents initialized with \algo\ demonstrate convergence behavior that is slightly variant to the target's location, effectively neutralizing the exploration penalty typically associated with distant goals.

\begin{figure}[h!]
    \centering
    \includegraphics[width=1\linewidth]{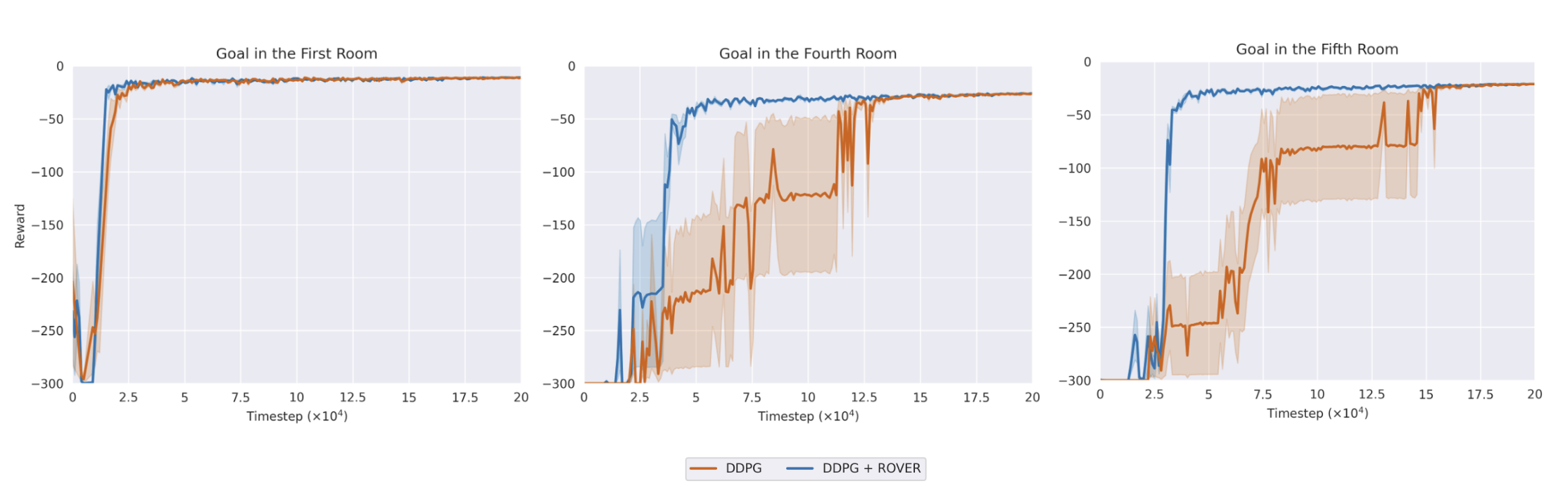}
    \caption{Comparison of learning curves for DDPG versus DDPG initialized with \algo\ in the multi-room environment. The plots show performance for goals located in the first room (left), fourth room (center), and fifth room (right). We show the mean and standard error among five seeds.}
    \label{fig:long-training}
        \vspace{-.4truecm}
\end{figure}

\subsection{Scaling Analysis of Pre-training Overhead} \label{app:overhead}

To assess the computational cost of our proposed pre-training phase, we analyzed the number of interaction steps required to achieve substantial state coverage across increasingly complex environments. We conducted this experiment in the \textit{Multi-Room} domain using one-hot state encodings, varying the number of rooms to increase the total state space size $|S|$.

We define the pre-training budget $T$ as the number of timesteps required for the policy to visit at least $95\%$ of the available states within a 20-episode evaluation window. Our objective is to determine the \textit{effective overhead} per potential downstream task. Since a single unsupervised pre-training phase facilitates learning for any goal location $g \in S$, the amortized cost per task is given by $T / |S|$.

The results, averaged over 5 seeds, are presented in \cref{tab:overhead_scaling}. We observe that the total interactions required ($T$) scale approximately linearly with the size of the state space. Consequently, the amortized overhead remains remarkably stable---roughly $65$--$75$ steps per potential goal state---even as the environment complexity triples.

\begin{table}[h]
    \centering
    \caption{Analysis of pre-training sample efficiency in the Multi-Room environment with one-hot encodings. The ``Amortized Overhead'' represents the pre-training cost divided by the number of potential goal locations (states), indicating a constant scaling factor relative to environment size.}
    \label{tab:overhead_scaling}
    \vspace{0.2cm} 
    \begin{tabular}{cccc}
        \toprule
        \textbf{Number of Rooms} & \textbf{State Space Size} ($|S|$) & \textbf{Pre-training Steps} ($T$) & \textbf{Amortized Overhead} ($T/|S|$) \\
        \midrule
        3 & 43  & $2800 \pm 285$  & 65 \\
        4 & 65  & $4100 \pm 235$  & 63 \\
        5 & 109 & $7800 \pm 380$  & 72 \\
        6 & 130 & $9600 \pm 335$  & 74 \\
        7 & 153 & $10800 \pm 425$ & 70 \\
        \bottomrule
    \end{tabular}
\end{table}

This preliminary analysis suggests that the additional cost incurred by pre-training is relatively minor when amortized across the distribution of possible tasks. By investing a linear amount of exploration upfront, we enable rapid adaptation to any specific goal, providing a sample efficiency boost that outweighs the initial overhead compared to training from scratch for each new target.

While these results are specific to the tabular setting with exact state representations, they offer a promising lower bound on efficiency. In continuous domains, where the number of potential goal locations is theoretically infinite, the amortized overhead approaches zero ($T / \infty \to 0$). However, practical applications often constrain goals to specific manifolds or subsets; in such cases, this linear scaling implies that \algo{} remains a computationally viable strategy for initializing goal-agnostic agents.

\subsection{Replay Buffer transfer to Offline algorithms}
\label{sec:data_transfer}

Transferring the replay buffer collected during unsupervised pre-training and transferring the pre-trained policy are distinct evaluation protocols, and they need not produce the same ranking across exploration methods. Policy transfer measures whether pre-training provides a good initialization for subsequent online learning, whereas data transfer measures the quality of the dataset collected during pre-training when reused by an offline RL algorithm \cite{laskin2021urlb,yarats2022dontchangealgorithmchange}. We therefore complement the policy-transfer results with a separate data-transfer study, even if it is not the main purpose of \algo.

Following \citet{yarats2022dontchangealgorithmchange}, we collect a fixed dataset with each exploration method and train the downstream agent purely offline, without any further interaction with the environment. As downstream learners, we consider CQL \cite{kumar2020conservativeqlearningofflinereinforcement}, which is explicitly designed for offline RL through conservative regularization, and DDPG \cite{lillicrap2015ddpg}, which we report only as a reference since it lacks any mechanism to control extrapolation error in the offline setting. We evaluate the resulting policies deterministically after $15$k gradient updates. All other aspects of the setup are the same as in \cref{sec:tabular-exps}, considering the \textit{Two Rooms} and \textit{Multi Rooms} settings.

The results are reported in \cref{tab:offline_transfer}. Under CQL, \textsc{ROVER} is consistently the strongest method, matching or outperforming all baselines in every environment and remaining the only method that succeeds on the hardest Multi-Room task (Goal: Room~5). Under offline DDPG, the picture is different: \textsc{ROVER} remains competitive and performs best on the easier settings, whereas CIC attains stronger average performance on the most distant goals.

This difference can be explained by the shape of the replay buffer distribution. As shown in the \cref{app:heatmaps}, CIC and \textsc{ROVER} cover similar regions of the state space, but differ in density. CIC produces a more homogeneous dataset, while \textsc{ROVER} places more mass near the initial states and along trajectory prefixes, consistent with its objective of exploring new regions without forgetting previously visited ones. Compared to CIC and \textsc{ROVER}, APT, RND, and SMM appear less effective at producing replay buffers that support downstream offline learning on these navigation tasks.

The same dataset structure affects the two offline algorithms differently. Offline DDPG relies on repeated batch updates and tends to benefit from more homogeneous coverage, which reduces instability in poorly represented regions. This helps explain why CIC can be stronger on the hardest goals under offline DDPG. CQL, by contrast, prefers transitions that are well supported by the data. Thus, the dataset need not be uniform everywhere: it is more important that it contains reliable start-to-goal transition chains. Even if the goal itself is visited less often, repeated intermediate transitions can still support near-optimal CQL performance. This is the kind of structure induced by \textsc{ROVER}, which explains why its data are preferred by CQL.

Overall, these results show that data transfer and behavior transfer capture different properties of unsupervised exploration, while also confirming that the replay buffers generated by \textsc{ROVER} remain highly effective in the fully offline setting.

\label{app:offline_transfer}
\begin{table*}[h]
\centering
\caption{Offline RL performance after $15$k gradient updates on custom goal-navigation tasks. CQL and DDPG are trained on fixed datasets collected during unsupervised pre-training, without further environment interaction. Results are mean $\pm$ standard error over $7$ seeds. Goals are in Rooms 1--2 for Two Rooms and Rooms 2, 4, 5 for Multi Rooms. Online DDPG trained for $50$k interactions is reported as a reference. Best result is boldfaced separately for each algorithm.}
\label{tab:offline_transfer}
\resizebox{\textwidth}{!}{%
\begin{tabular}{llccccc}
\toprule
\multirow{2}{*}{Algorithm} & \multirow{2}{*}{Dataset policy} 
& \multicolumn{2}{c}{Two Rooms} 
& \multicolumn{3}{c}{Multi Rooms} \\
\cmidrule(lr){3-4} \cmidrule(lr){5-7}
& & Goal: Room 1 & Goal: Room 2 & Goal: Room 2 & Goal: Room 4 & Goal: Room 5 \\
\midrule
\multirow{6}{*}{CQL}
& Random & $-11.0\,\pm\,0.0$ & $-35.0\,\pm\,37.3$ & $ -300.0 \,\pm\, 0.0 $ & $ -300.0 \,\pm\, 0.0 $ & $ -300.0 \,\pm\, 0.0 $  \\
& APT    & $-11.0\,\pm\,0.0$ & $-80.0\,\pm\,0$    & $-11.0\,\pm\,0.0$ & $ -300.0 \,\pm\, 0.0 $ & $ -300.0 \,\pm\, 0.0 $  \\
& SMM    & $-11.0\,\pm\,0.0$ & $-80.0\,\pm\,0.0$  & $ -300.0 \,\pm\, 0.0 $ & $ -300.0 \,\pm\, 0.0 $ & $ -300.0 \,\pm\, 0.0 $  \\
& CIC    & $-11.0\,\pm\,0.0$ & $\mathbf{-17.0\,\pm\,0.0}$  & $-11.0\,\pm\,0.0$ & $-21.0\,\pm\,0.0$ & $ -300.0 \,\pm\, 0.0 $  \\
& RND    & $-11.0\,\pm\,0.0$ & $-80.0\,\pm\,0.0$  & $-11.0\,\pm\,0.0$ & $ -300.0 \,\pm\, 0.0 $ & $ -300.0 \,\pm\, 0.0 $  \\
& ROVER (Ours) &  $-11.0\,\pm\,0.0$ & $\mathbf{-17.0\,\pm\,0.0}$ & $\mathbf{-11.0\,\pm\,0.0}$ & $\mathbf{-21.0\,\pm\,0.0}$  & $\mathbf{-26.0\,\pm\,0.0
}$ \\
\midrule
\multirow{6}{*}{DDPG}
& Random & $-11.0\,\pm\,0.0$ & $-32.0\,\pm\,62.4$ & $-218.2\,\pm\,140.0$ & $ -300.0 \,\pm\, 0.0 $ & $ -300.0 \,\pm\, 0.0 $  \\
& APT    & $-11.0\,\pm\,0.0$ & $-80.0\,\pm\,0.0$  & $-135.1\,\pm\,154.5$ & $ -300.0 \,\pm\, 0.0 $ & $ -300.0 \,\pm\, 0.0 $  \\
& SMM    & $-11.0\,\pm\,0.0$ & $-80.0\,\pm\,0.0$  & $-300.0\,\pm\,0.0$ & $ -300.0 \,\pm\, 0.0 $ & $ -300.0 \,\pm\, 0.0 $  \\
& CIC    & $-11.0\,\pm\,0.0$ & $-71.2\,\pm\,23.8$ & $-91.6\,\pm\,141.0$ & $\mathbf{-100.7\,\pm\,136.1}$ & $\mathbf{-143.4\,\pm\, 146.5}$ \\
& RND    & $-11.0\,\pm\,0.0$ & $-80.0\,\pm\,0.0$ & $-217.4\,\pm\,141.3$ & $ -300.0 \,\pm\, 0.0 $ & $ -300.0 \,\pm\, 0.0 $   \\
& ROVER (Ours)  & $-11.0\,\pm\,0.0$ & $\mathbf{-17.0\,\pm\,0.0}$ &  $\mathbf{-54.4\,\pm\,105.9}$ & $-140.6\,\pm\, 149.1$ & $-221.7\,\pm\,133.7$  \\
\midrule
\multirow{1}{*}{DDPG (online)}
& 50k interactions & $-11.0\,\pm\,0.0$ & $-17.0\,\pm\,30.1$ & $-11.0\,\pm\,0.0$ & $-160.5\,\pm\,152.8$ & $-221.3\,\pm\,133.7$ \\
\bottomrule
\end{tabular}%
}
\end{table*}

\subsubsection{Replay Buffer Coverage Heatmaps}\label{app:heatmaps}
To better understand the differences observed in the data-transfer experiments, we visualize the empirical state-visitation distributions induced by the replay buffers collected during unsupervised pre-training. For each exploration method, we aggregate the visited states stored in the replay buffer and report the corresponding visitation heatmap over the grid. These plots provide a qualitative view of how the dataset is distributed spatially, complementing the return values reported in Table~\ref{tab:offline_transfer}.

The main observation is that the datasets produced by the different methods often have similar \emph{support}, in the sense that they eventually cover comparable portions of the navigable state space, but differ substantially in \emph{density}. In particular, CIC tends to produce a more homogeneous replay buffer once distant regions are reached, whereas \textsc{ROVER} places more mass near the initial states and along trajectory prefixes and bottlenecks. This behavior is consistent with the design of \textsc{ROVER}: the objective is not only to expand toward novel regions, but also to do so without forgetting previously visited ones. As a result, the replay buffer retains stronger density around the starting region and around the corridors that connect the start distribution to distant rooms.

This distinction helps explain the different behavior of the offline learners. A more homogeneous dataset, such as the one induced by CIC, can be beneficial to offline DDPG, whose batch updates are sensitive to uneven coverage and can deteriorate when large parts of the state-action space are weakly represented. By contrast, CQL benefits more from datasets that contain reliable and repeatedly observed transition chains from the start toward the goal. In this case, the denser prefix structure induced by \textsc{ROVER} is advantageous, since the conservative update can propagate value through well-supported transitions without relying on poorly represented actions. The remaining baselines are generally weaker in both respects: Random lacks directed exploration, while APT, RND, and SMM either fail to maintain sufficient support on long trajectories or produce datasets that are too sparse in the regions that matter for downstream control.

\begin{figure*}[t]
    \centering
    \includegraphics[width=\textwidth]{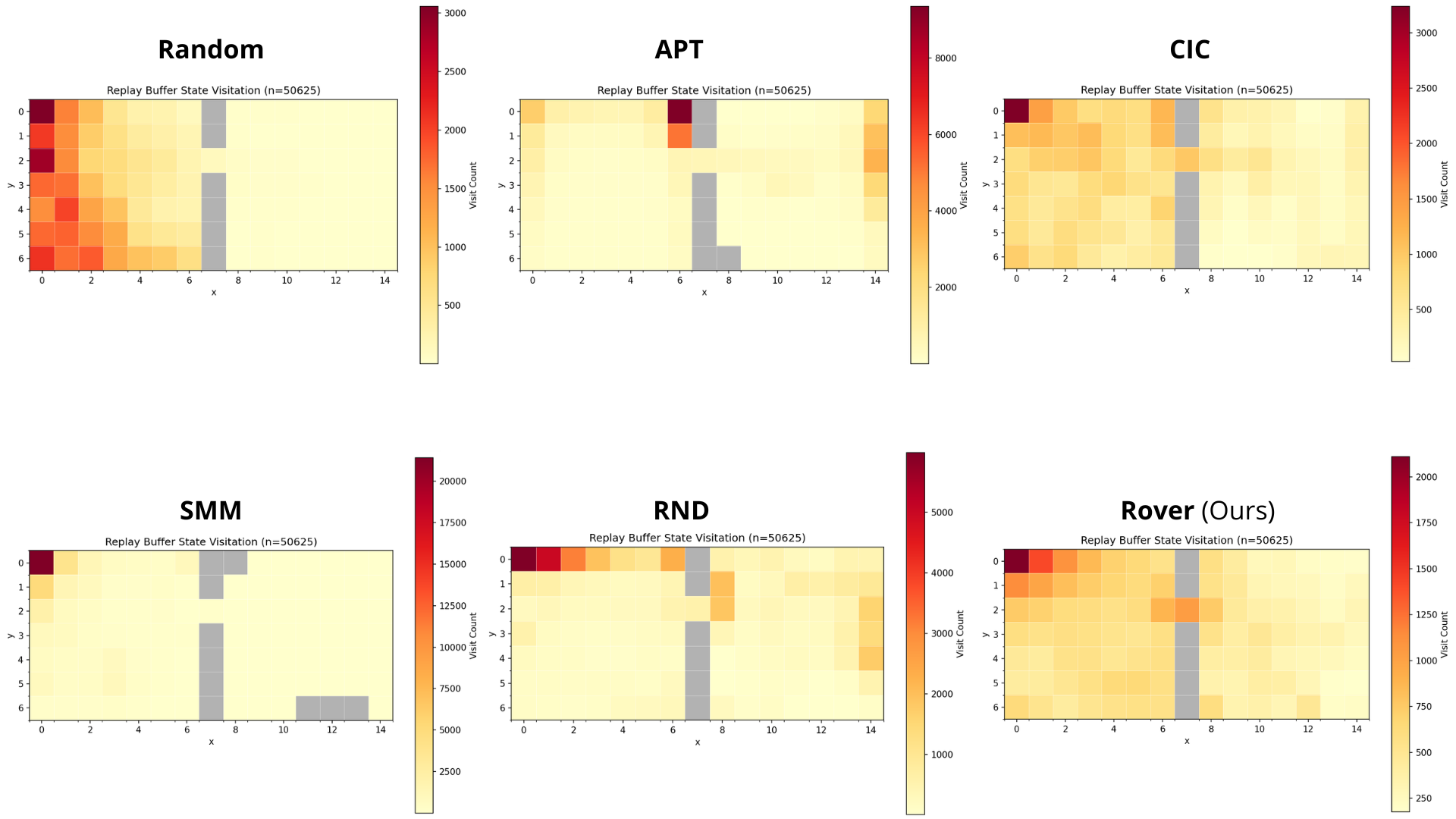}
  
    \caption{
    Replay-buffer state-visitation heatmaps in the \textbf{Two Rooms} environment for the different unsupervised exploration methods. Darker colors indicate states that appear more frequently in the replay buffer collected during pre-training. While several methods eventually cover both rooms, the induced datasets differ markedly in density. 
    }
    \label{fig:two_rooms_heatmaps}
\end{figure*}

\begin{figure*}[t]
    \centering
    \includegraphics[width=\textwidth]{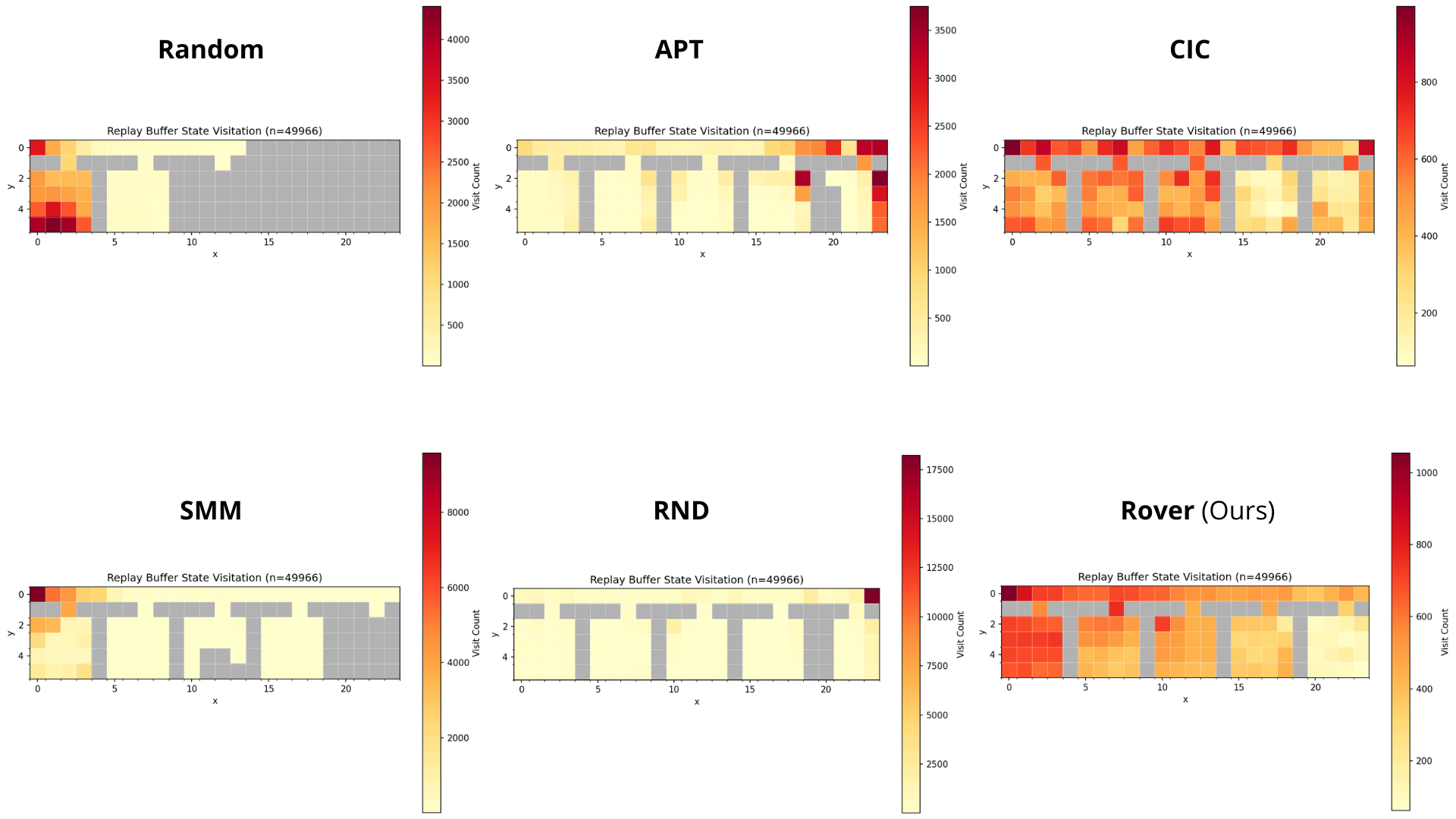}
    \caption{
    Replay-buffer state-visitation heatmaps in the \textbf{Multi Rooms} environment for the different unsupervised exploration methods. Although CIC and \textsc{ROVER} both reach distant rooms, their replay buffers differ in how often the traversed regions are revisited. 
    }
    \label{fig:multi_rooms_heatmaps}
\end{figure*}

Overall, the heatmaps support the interpretation given in \cref{sec:data_transfer}: in these navigation tasks, offline performance depends not only on whether a method reaches distant parts of the environment, but also on how the replay buffer distributes mass across start states, bottlenecks, and distal regions. This clarifies why replay buffers with similar support can nevertheless induce different behavior when paired with different offline RL algorithms.

\subsection{Gamma ans Sink State sensitivity analysis}
We perform a preliminary sensitivity analysis of the discount factor $\gamma$, the sink-state norm $\epsilon$, and the batch size used for policy mirror descent updates. Experiments are conducted in the 5-room Multi-Room environment, where we measure state-space coverage as a function of the number of training samples. In this setting, $|\mathcal{S}| = 109$, $|\mathcal{S}||\mathcal{A}| = 436$, and the horizon is $H = 300$.

For the discount factor, intermediate values provide the most favorable trade-off between early exploration and long-horizon planning. In particular, $\gamma = 0.9$ performs well in the low-sample regime, whereas $\gamma = 0.99$ achieves stronger asymptotic coverage. In contrast, smaller values, such as $\gamma \leq 0.8$, and very large values, such as $\gamma = 0.999$, lead to slower convergence or reduced final coverage.

We observe that the method is generally robust to the sink-state norm $\epsilon$ over a range of small values. With a large batch size of $8096$ samples, corresponding to approximately $18 \times |\mathcal{S}||\mathcal{A}|$, setting $\epsilon = 0$ or using small positive values, e.g., $\epsilon \leq 0.1$, leads to rapid coverage of the feasible state space. Larger values degrade performance, and the extreme setting $\epsilon = 10$ prevents effective exploration. This behavior is consistent with the large-batch regime, where the sampled replay data is likely to contain most previously observed state-action pairs, allowing the learned transition model to be fit on nearly all discovered transitions.

When reducing the batch size to $1024$, approximately $2.3 \times |\mathcal{S}||\mathcal{A}|$, the preferred regime shifts slightly toward a small but nonzero sink-state norm. In particular, $\epsilon = 10^{-3}$ performs best among the tested values. One possible explanation is that, with fewer samples, some previously observed state-action pairs may be omitted when fitting the transition model, making the agent more prone to revisit already explored regions. In this case, a small sink-state component can help regularize the model and improve exploration.

Overall, these results suggest that exploration performance depends on the interaction between $\gamma$, $\epsilon$, and the policy-update batch size. Nevertheless, the method remains robust across a broad range of hyperparameter settings, with failures primarily occurring under extreme choices such as very large $\epsilon$. We view these experiments as preliminary evidence of robustness.

\begin{figure}[h!]
    \centering
    \includegraphics[width=1\linewidth]{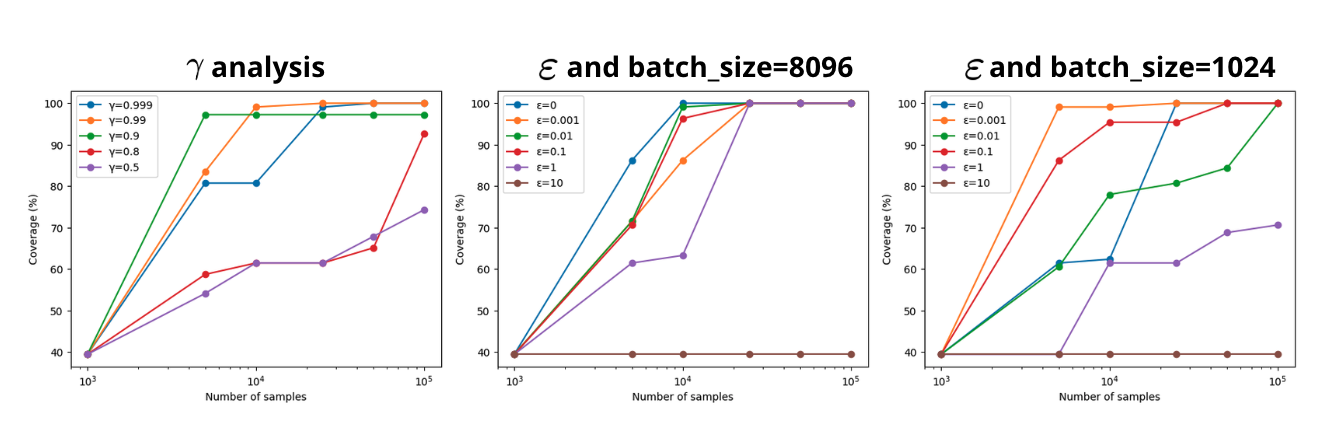}
    \caption{Preliminary analysis on the sensitivity of $\gamma$ and $\epsilon$ norm of the sink state.}
    \label{fig:sensitivity}
\end{figure}

\subsection{ROVER Initialization in Dense Reward Settings}

We further investigate the impact of ROVER pre-training in environments with dense, shaped rewards, defined as the negative distance to the target. In this setting, exploration is no longer ``blind" but explicitly guided by the reward gradient. We evaluate performance on the Multi-Room environment (5 rooms, goal in the 4th), comparing both discrete (one-hot) and visual (pixel-based) observations to decouple the effects of the pre-trained policy from the pre-trained encoder.

In the discrete setting (\cref{fig:dense}, Left), ROVER initialization provides no significant advantage over learning from scratch. This is expected, as the dense reward signal is sufficient to guide the agent without the need for intrinsic exploration. However, in the visual domain (\cref{fig:dense}, Right), transferring the pre-trained encoder yields a substantial acceleration in convergence. This suggests that while the policy prior offers limited utility when dense guidance is available, the representation learned by the world model remains highly effective for visual feature extraction. We leave a granular decomposition of these effects—isolating the encoder transfer from the policy transfer—to future work.

\begin{figure}[h!]
    \centering
    \includegraphics[width=1\linewidth]{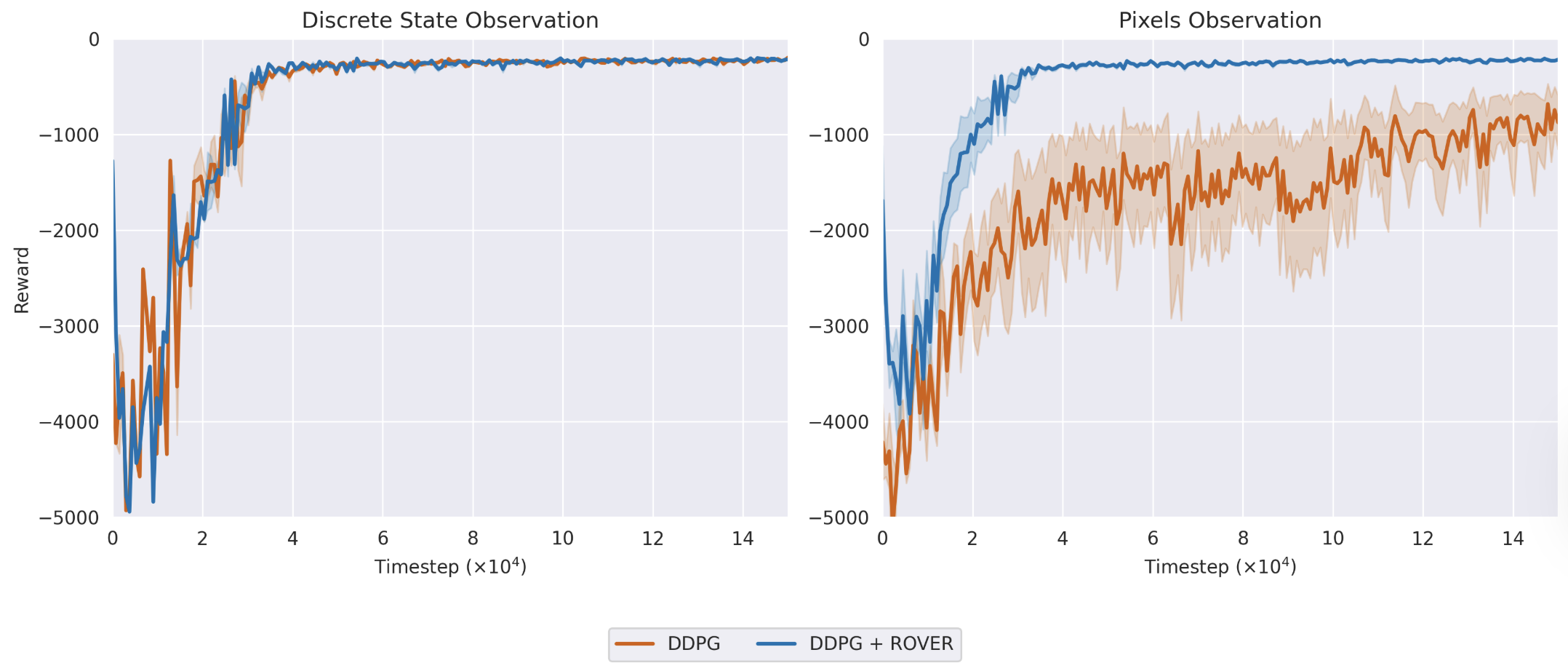}
    \caption{In this plot, we show the learning curves using DDPG and DDP + \algo\ in state-based (left) and pixel-based (right) environments. In particular, we considered \textit{Multi-Room}, where the goal is in the fourth room.}
    \label{fig:dense}
\end{figure}

\end{document}